\pdfoutput=1

\documentclass[11pt]{article}

\PassOptionsToPackage{dvipsnames,usename}{xcolor}
\usepackage[preprint]{acl}
\usepackage{times}
\usepackage{latexsym}

\usepackage[T1]{fontenc}
\usepackage[utf8]{inputenc}

\usepackage{microtype}

\usepackage{inconsolata}

\usepackage{graphicx}

\usepackage{xspace}
\usepackage{amsfonts}
\usepackage{amsthm}
\usepackage{amsmath}
\usepackage{amssymb}
\usepackage{bbm}
\usepackage{caption}
\usepackage{subcaption}
\usepackage{xcolor}
\usepackage{scalerel}
\usepackage{booktabs}
\usepackage{multirow}
\usepackage{xfrac} 
\usepackage{nicefrac} 
\usepackage{thmtools} 
\usepackage{thm-restate}
\usepackage[most]{tcolorbox}
\usepackage{multicol}

\usepackage{pifont}%
\usepackage{xcolor}
\usepackage[breaklinks]{hyperref}
\definecolor{darkblue}{rgb}{0, 0, 0.5}
\hypersetup{colorlinks=true, citecolor=darkblue, linkcolor=darkblue, urlcolor=darkblue}

\newtheorem{defin}{Definition}
\newtheorem{lemma}{Lemma}

\usepackage{cleveref}
\crefname{section}{\S}{\S\S}
\Crefname{section}{\S}{\S\S}
\crefname{table}{Tab.}{}
\crefname{figure}{Fig.}{Figs.}
\crefname{algorithm}{Alg.}{}
\crefname{equation}{eq.}{eqs.}
\crefname{appendix}{App.}{}
\crefname{theorem}{Theorem}{}
\crefname{lemma}{Lemma}{Lemmas}
\crefname{prop}{Prop.}{}
\crefname{defin}{Defn.}{}
\crefname{cor}{Corollary}{}
\crefname{observation}{Observation}{}
\crefname{assumption}{Assumption}{}
\crefformat{section}{\S#2#1#3}
\crefformat{footnote}{#2\footnotemark[#1]#3}

\usepackage{tabularray}
\usepackage{cellspace}
\setlength\cellspacetoplimit{4pt}
\setlength\cellspacebottomlimit{4pt}
\newcommand\cincludegraphics[2][]{\raisebox{-0.3\height}{\includegraphics[#1]{#2}}}
\usepackage{setspace}
\makeatletter
\DeclareRobustCommand*{\escapeus}[1]{%
    \begingroup\@activeus\scantokens{#1\endinput}\endgroup}
\begingroup\lccode`\~=`\_\relax
\lowercase{\endgroup\def\@activeus{\catcode`\_=\active \let~\_}}
\makeatother

\usepackage[scaled=0.85]{helvet}
\newcommand{\makesf}[1]{\textsf{{\escapeus{#1}}}}

\DeclareMathOperator*{\expect}{\mathbb{E}}

\definecolor{mygray}{rgb}{.3, .3, .3}
\definecolor{mygreen}{rgb}{0, .5, 0}
\definecolor{myred}{rgb}{.6, 0.15, 0.15}
\newcommand{\charactercolor}{ForestGreen}
\newcommand{\wordcolor}{orange}
\newcommand{\subwordcolor}{purple}

\newcommand{\mycharacter}[2]{\newcommand{#1}{{\color{\charactercolor} #2}}}
\newcommand{\myword}[2]{\newcommand{#1}{{\color{\wordcolor} #2}}}
\newcommand{\mysubword}[2]{\newcommand{#1}{{\color{\subwordcolor} #2}}}

\mycharacter{\character}{c}
\myword{\word}{w}
\mysubword{\subword}{s}

\mycharacter{\characterspace}{\mathcal{C}}
\myword{\wordspace}{\mathcal{W}}
\mysubword{\subwordspace}{\mathcal{S}}
\mycharacter{\alphabet}{\characterspace}
\myword{\lexicon}{\wordspace}
\mysubword{\vocab}{\subwordspace}
\mysubword{\vocabeos}{\overline{\subwordspace}}

\mycharacter{\characters}{\mathbf{\character}}
\myword{\words}{\mathbf{\word}}
\mysubword{\subwords}{\mathbf{\subword}}

\mycharacter{\Character}{C}
\myword{\Word}{W}
\mysubword{\Subword}{S}
\mycharacter{\Characters}{\mathbf{\Character}}
\myword{\Words}{\mathbf{\Word}}
\mysubword{\Subwords}{\mathbf{\Subword}}

\newcommand{\charmap}{\mathbb{S}}
\newcommand{\charmapfunc}[1]{\genfrac{}{}{0pt}{}{\raisebox{-0.7ex}{\scaleto{\charmap}{6pt}}}{\scaleto{#1}{4pt}}}

\newcommand{\wordsubwordmap}{\charmapfunc{\lexicon\to\vocab^{*}}}

\newcommand{\subwordcharsmap}{\charmapfunc{\vocab^{*}\to\alphabet^{*}}}
\newcommand{\charsubwordsmap}{\charmapfunc{\alphabet^{*}\to\vocab^{*}}}
\newcommand{\wordcharsmap}{\charmapfunc{\lexicon^{*}\to\alphabet^{*}}}
\newcommand{\charwordsmap}{\charmapfunc{\alphabet^{*}\to\lexicon^{*}}}
\newcommand{\wordsubwordsmap}{\charmapfunc{\lexicon^{*}\to\vocab^{*}}}
\newcommand{\subwordwordsmap}{\charmapfunc{\vocab^{*}\to\lexicon^{*}}}

\newcommand{\charmapfuncmath}[1]{\underset{\scaleto{#1}{4pt}}{\scaleto{\charmap}{6pt}}}

\newcommand{\wordsubwordmapmath}{\charmapfuncmath{\lexicon\to\vocab^{*}}}

\newcommand{\subwordcharsmapmath}{\charmapfuncmath{\vocab^{*}\to\alphabet^{*}}}
\newcommand{\charsubwordsmapmath}{\charmapfuncmath{\alphabet^{*}\to\vocab^{*}}}
\newcommand{\wordcharsmapmath}{\charmapfuncmath{\lexicon^{*}\to\alphabet^{*}}}
\newcommand{\charwordsmapmath}{\charmapfuncmath{\alphabet^{*}\to\lexicon^{*}}}
\newcommand{\wordsubwordsmapmath}{\charmapfuncmath{\lexicon^{*}\to\vocab^{*}}}
\newcommand{\subwordwordsmapmath}{\charmapfuncmath{\vocab^{*}\to\lexicon^{*}}}

\mysubword{\subseqs}{\mathcal{S}}

\newcommand{\vtheta}{\boldsymbol{\theta}}
\newcommand{\ptheta}{p_{\scaleto{\vtheta}{4pt}}}
\newcommand{\R}{\mathbb{R}}
\newcommand{\one}{\mathbbm{1}}
\newcommand{\eos}{\texttt{eos}\xspace}
\newcommand{\defeq}{\mathrel{\stackrel{\textnormal{\tiny def}}{=}}}
\newcommand\sbullet[1][.5]{\mathbin{\vcenter{\hbox{\scalebox{#1}{$\bullet$}}}}}

\newcommand{\pprefix}{\mathbb{P}}
\newcommand{\powerset}{\mathcal{P}}
\newcommand{\elementsetfunc}[1]{\Psi_{\scaleto{#1}{4pt}}}
\newcommand{\wordsset}{\elementsetfunc{\lexicon}}
\newcommand{\subwordsset}{\elementsetfunc{\vocab}}

\newcommand{\pprefixfunc}[1]{\pprefix_{\scaleto{#1}{4pt}}}
\newcommand{\pprefixwords}{\pprefixfunc{\lexicon}}
\newcommand{\pprefixsubwords}{\pprefixfunc{\vocab}}

\newcommand{\bow}{\texttt{bow}\xspace}
\newcommand{\eow}{\texttt{eow}\xspace}
\newcommand{\midstring}{\texttt{mid}\xspace}

\newcommand{\subwordschosen}{\subwords^{\words}}
\newcommand{\subwordschosenprefix}{\subwords^{\words_{<t}}}
\newcommand{\subwordschosenfull}{\subwords^{\words_{<t} \circ \word}}
\newcommand{\subwordschosenword}{\subwords^{\word}}

\newcommand{\characterspacebow}{\characterspace_{\texttt{bow}}}
\newcommand{\characterspaceboweos}{\overline{\characterspace}_{\texttt{bow}}}
\newcommand{\vocabbow}{\vocab_{\texttt{bow}}}
\newcommand{\vocabboweos}{\vocabeos_{\texttt{bow}}}
\newcommand{\vocabboweospluscont}{\overline{\vocab_{\texttt{bow}}\circ\vocab^*}}
\newcommand{\vocabeow}{\vocab_{\texttt{eow}}}
\newcommand{\vocabmid}{\vocab_{\texttt{mid}}}
\newcommand{\vocabmideos}{\vocabeos_{\midstring}}
\newcommand{\vocabmideospluscont}{\overline{\vocab_{\midstring}\circ\vocab^*}}

\newcommand{\textwords}[1]{\texttt{\textcolor{\wordcolor}{#1}}}
\newcommand{\textchar}[1]{\textit{\textcolor{\charactercolor}{#1}}}
\newcommand{\texttokenised}[1]{\textit{\textcolor{\subwordcolor}{#1}}}
\newcommand{\wordsubwordsmaplargeequation}{\!\!\!\!\!\wordsubwordsmap\!\!}
\newcommand{\circlargeequation}{\!\!\!\!\circ\!\!}

\newcommand{\emptystring}{\texttt{``''}}

\newcommand{\deltallh}{\Delta_{\mathrm{llh}}}
\newcommand{\regressorone}{f_{\psi_1}}
\newcommand{\regressor}{f_{\psi}}
\newcommand{\regressortwo}{f_{\psi_2}}
\newcommand{\dataset}{\mathcal{D}}
\newcommand{\vx}{\mathbf{x}}

\newcommand{\expectsurp}{\expect[\surp(\word_t)]}

\newcommand{\expectsurpsquared}{\frac{\scaleto{\expect[\surp^2(\word_t)]}{8pt}}{\scaleto{\expect[\surp(\word_t)]}{8pt}}}

\newcommand*{\circled}[1]{\tikz[baseline=(char.base)]{
        \node[shape=circle,draw,inner sep=1pt] (char) {\normalfont{\small #1}};}}

\newcounter{bugfixcounter}
\DeclareRobustCommand{\newbugfixcounter}[1]{\refstepcounter{bugfixcounter}\textbf{Bug Fix~\circled{\thebugfixcounter}\label{#1}}\xspace}
\DeclareRobustCommand{\usebugfixcounter}[1]{\textbf{Bug Fix~\circled{\ref{#1}}}\xspace}

\newcommand{\defn}[1]{\textbf{#1}}
\newcommand{\wordexample}[1]{\textit{#1}}

\newcommand{\stringequiv}{\mathrel{\stackrel{{\scaleto{\Delta}{3pt}}}{=}}}
\newcommand{\stringequivtosubwords}{\mathrel{\stackrel{{\scaleto{\lexicon^{*} \to \vocab^{*}}{3pt}}}{\implies}}}
\newcommand{\stringequivtowords}{\mathrel{\stackrel{{\scaleto{\vocab^{*} \to \lexicon^{*}}{3pt}}}{\implies}}}

\setlength\titlebox{4.8cm}

\title{How to Compute the Probability of a Word}

\newcommand{\ethid}{{\includegraphics[scale=0.045]{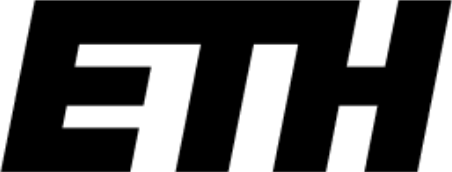}}}
\newcommand{\ethemailadress}[1]{\href{mailto:#1@inf.ethz.ch}{\texttt{#1}}}

\author{Tiago Pimentel,
Clara Meister
\\
  \ethid \\
  \{\ethemailadress{tiago\!.\!pimentel}, 
  \ethemailadress{clara\!.\!meister}%
  \}@\texttt{inf\!.\!ethz\!.\!ch}
}

\begin{document}
\maketitle
\begin{abstract}
Language models (LMs) estimate a probability distribution over strings in a natural language;
these distributions are crucial for computing perplexity and surprisal in linguistics research.
While we are usually concerned with measuring these values for words, most LMs operate over subwords.
Despite seemingly straightforward, accurately computing probabilities over one unit given probabilities over the other requires care. 
Indeed, we show here that many recent linguistic studies have been incorrectly computing these values. 
This paper derives the correct methods for computing word probabilities, highlighting issues when relying on language models that use beginning-of-word (\bow)-marking tokenisers, e.g., the GPT family. 
Empirically, we show that correcting the widespread bug in probability computations affects measured outcomes in sentence comprehension and lexical optimisation analyses. 

\begin{tblr}{colspec = {Q[c,m]|X[l,m]}, stretch = 0}
    \cincludegraphics[width=1.2em, keepaspectratio]{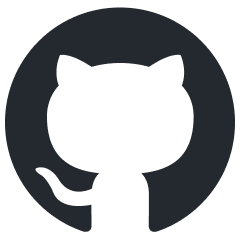}
     & \setstretch{.5}\href{https://github.com/tpimentelms/probability-of-a-word}{{\makesf{tpimentelms/probability-of-a-word}}} \\
\end{tblr}
\noindent
\begin{tblr}{colspec = {Q[c,m]|X[l,m]}, stretch = 0}
    \cincludegraphics[width=1.2em, keepaspectratio]{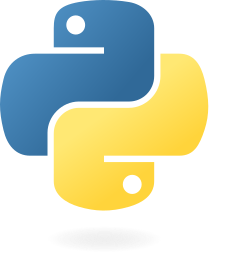}
     & \setstretch{.5}{\makesf{pip install wordsprobability}} \\
\end{tblr}
\end{abstract}

\section{Introduction}

Language models (LMs) define probability distributions.
After being trained on language data, these models can be used to compute estimates of the probability of a sequence of characters $\characters \in \characterspace^*$, or of a word $\word_t \in \wordspace$ in context $\words_{<t} \in \wordspace^*$. 
While deriving such estimates is now rarely the explicit goal of training such models,\footnote{Rather, LMs have become known for their high performance on downstream natural language processing (NLP) tasks \cite{gpt2,Touvron2023LLaMAOA}.} this use case is still critical in several fields. 
Estimating the probability of a sequence of characters, for instance, is necessary to compute a model's perplexity; a core evaluation metric in LM training.
Estimating the probability of a word in context is necessary to compute a \defn{word's surprisal}:  $-\log p(\word_t \mid \words_{<t})$, an important value in both psycho- and computational linguistics \citep{hale2001probabilistic,levy2007speakers,piantadosi2011word,pimentel-etal-2023-revisiting}.

\newcommand{\mathcomment}[1]{\text{\textcolor{gray}{#1}}}

\begin{figure}[t]
    \centering
\begin{tcolorbox}[colback=cyan!5!white,colframe=cyan!75!black,left=4pt,title=\!\!TL;DR: {\small How to correctly compute word probabilities}]
    \vspace{-5pt}
    {\small \textcolor{gray}{ Given a word $\word$ in context $\words_{<t}$, let $\subwordschosenword$ and $\subwordschosenprefix$ be their respective subword sequences output by a tokeniser. Further, let:}
    \vspace{-8pt}
    \begin{align}
        &\quad\, p\left(\subwordschosenword \mid \subwordschosenprefix\right) = \prod_{i=1}^{|\subwordschosenword|}p\left(\subword_i^{\word} \mid \subwordschosenprefix \circ \subwords_{<i}^{\word}\right)\nonumber 
        \\[5pt]
        &\sbullet[.75]\mathcomment{LM with end-of-word marking tokeniser} \nonumber \\
        &\quad\, p(\word \mid \words_{<t}) 
        = p\left(\subwordschosenword \mid \subwordschosenprefix\right)  \nonumber \\
        &\sbullet[.75]\mathcomment{LM with beginning-of-word marking tokeniser} \nonumber \\
        &\quad\, p(\word \mid \words_{<t}) 
        = p\left(\subwordschosenword \mid \subwordschosenprefix\right)
        \,
        \underbrace{\frac{\!\!\!\sum\limits_{\{\subword \in \vocabboweos\}}\!\!\!\!\! p\left(\subword \mid \subwordschosenprefix\circ\subwordschosenword\right)}
        {\!\!\!\sum\limits_{\{\subword \in \vocabboweos\}}\!\!\!\!\! p\left(\subword \mid \subwordschosenprefix\right)}}_{\text{``bug'' fix}} \nonumber 
    \end{align}}
\end{tcolorbox}
    \vspace{-10pt}
    \caption{
    Equations for computing a word's contextual probability $p(\word \mid \words_{<t})$ using a subword-based LM $p\left(\subword_t \mid \subwords_{<t}\right)$. 
    $\vocabboweos$ is a subset of the tokeniser's vocabulary marking beginnings of words.
    The ``bug'' fix can be computed for ``free'', i.e., within a single model pass.
    }
    \label{fig:tldr}
    \vspace{-7pt}
\end{figure}

Notably, most recent LMs operate over \defn{subwords} \cite{sennrich-etal-2016-neural,kudo-richardson-2018-sentencepiece}: sequences of characters that frequently occur together. This is done for both optimisation and efficiency reasons \cite{galle-2019-investigating,Mielke2021BetweenWA,zouhar-etal-2023-tokenization}. 
Subwords, however, do not necessarily constitute actual words, as defined by a language's lexicon.\footnote{Despite the name, which we use out of convention, a subword need not strictly be a subunit of a word. For example, subwords can span multiple words, containing the markers used to delineate words, e.g., white spaces.} 
At least superficially, converting from a probability distribution over subwords $p(\subwords)$ into one over characters $p(\characters)$ or words  
$p(\words)$ appears straightforward. 
However, some technical details are easy to overlook.
For example, several sequences of subwords $\subwords$ can map to a single sequence of characters $\characters$, implying an accurate computation of $p(\characters)$ should marginalise over these options \cite{cao-rimell-2021-evaluate}.

In this work, we discuss how to correctly compute a word's contextual probability: $p(\word_t \mathop{\mid} \words_{<t})$.
This value's computation depends on the choice of tokeniser used to define an LM's vocabulary.
When using an end-of-word (\eow)-marking tokeniser, computing $p(\word_t \!\!\mid\! \words_{<t})$ is simple.
However, when using a beginning-of-word (\bow)-marking tokeniser, correctly computing this value is not as straightforward. 
We derive methods for these tokenisation schemes, which we present in \cref{fig:tldr}.
Since many widely-used LMs employ \bow-marking tokenisers (e.g., the GPT models, Pythia, Mistral), this highlights a wide-spread ``bug'' in how most recent psycholinguistics and computational linguistics works compute word probabilities \citep[present in, e.g.,][]{oh2023why,wilcox-etal-2023-language,pimentel-etal-2023-revisiting,shain2024large}.\footnote{Concurrent work by \citet{oh2024leading} points out this same issue and proposes a solution similar to \usebugfixcounter{bugfix:bow} (in our \Cref{thm:probofbowtokeniser}). 
}

Empirically, we evaluate how correcting this computation affects the results of two prior empirical analyses: one on sentence comprehension and another on the lexicon's communicative efficiency.
While these studies' conclusions do not change, we do observe statistically significant differences between the measured quantities when using the correct vs.\ buggy methods for computing word probabilities.
We conclude this methodological choice may impact empirical analyses, and that future work should adopt these proposed corrections.\looseness=-1

\vspace{-3pt}
\section{What is a Word?}
\vspace{-2pt}

Despite decades of discussion and debate, there is no single, widely accepted definition of what constitutes a word  \citep{haspelmath2023defining}. 
Typically, definitions are made with respect to some system within the language, such as its orthography, phonology, or grammar.
As a concrete example, one can delineate words using the sound system of a language: if we assume words define  
the domain over which certain phonological processes operate (e.g., vowel harmony), we can delineate words based on those processes' boundaries  \citep{Hall1999StudiesOT,nespor2007prosodic}.
Alternatively, one could define words as grammatical elements (e.g., a root plus affixes) that are cohesive, occur in a fixed order, and have a coherent meaning \citep{Dixon2003WordWA}.
Notably grammatical and phonological words are non-isomorphic.
For example, English hyphenated elements like \emph{editor-in-chief} or \emph{mother-in-law} are typically analysed as a single grammatical word that contains multiple phonological words \citep{Dixon2003WordWA}.

We abstain from this broader discussion here. 
While we use the definition common to natural language processing applications---where words are defined orthographically\footnote{Orthographic words are defined as sequences of characters surrounded by white space or other special delimiters. One such delimiter is \wordexample{'}, present in the English clitic \wordexample{'s}.}---our methods only assume the existence of a deterministic set of rules for segmenting a string of characters into words.

\vspace{-3pt}
\section{Words and Distributions Over Them}
\vspace{-2pt}

Let $\lexicon$ be a lexicon---the (potentially infinite) set of all words in a language---and $\word \in \lexicon$ a word in this lexicon. 
Further, let $\words \in \lexicon^*$ be a sequence of words; $\lexicon^*$ denotes the set of all finite-length word sequences. 
Now, assume distribution $p$ describes the probability with which users of this language produce sequences $\words$.
We can decompose these probabilities autoregressively as:
\begin{align}
    p(\words) = p(\eos\mid \words)\prod_{t=1}^{|\words|} p(\word_t \mid \words_{<t})
\end{align}
where \eos is a special end-of-sequence symbol that makes
this probability distribution over $\lexicon^*$ valid.\footnote{See \citet{du-etal-2023-measure} for a longer discussion on when probability distributions over $\lexicon^*$ are valid.}

This paper is concerned with the proper method for computing the probability of a word in context, i.e., $p(\word_t \mathop{\mid} \words_{<t})$, using a pretrained language model.
To this end, we first discuss its equivalence to other quantities, which will ultimately reveal a flaw in prior approaches to its computation.
We start by defining a probability function $\pprefixwords$, which operates over sets of strings $\wordsset \subseteq \lexicon^*$.

\begin{defin} \label{defn:prob_words}
     Given distribution $p(\words)$, we define the probability function $\pprefixwords: \powerset(\lexicon^{*}) \to [0,1]$, which returns the probability of occurrence of any word sequence $\words \in \wordsset \subseteq \lexicon^*$. 
     As these events are disjoint,  $ \pprefixwords(\wordsset)$ can be defined as:\looseness=-1
    \begin{align}
        \pprefixwords(\wordsset) \defeq \sum_{\words \in \wordsset} p(\words)
    \end{align}
\end{defin}

Now, let $\circ$ denote concatenation (between either strings or sets of strings).
For instance, we can write $\words \circ \lexicon^* = \{\words \circ \words' \mathop{\mid} \words' \mathop{\in} \lexicon^*\}$ to represent the set of all strings with prefix $\words$.
We can compute our desired conditional distribution as the quotient of two evaluations of $\pprefixwords$:
\begin{align}\label{eq:quotient}
    p(\word \mid \words_{<t}) 
    &= \frac{\pprefixwords(\words_{< t} \circ \word \circ \lexicon^*)}
    {\pprefixwords(\words_{<t} \circ \lexicon^*)}
\end{align}
Note that this is a trivial invocation of the joint rule of probability: the conditional $p(\word \mid \words_{<t})$ is equal to the probability of observing prefix $\words_{<t} \circ \word$---represented by $\pprefixwords(\words_{< t} \circ \word \circ \lexicon^*)$---divided by the probability of observing prefix $\words_{<t}$---represented by $\pprefixwords(\words_{<t} \circ \lexicon^*)$.
We call probabilities of the form $\pprefixwords(\words \circ \lexicon^*)$ the \defn{prefix probability} of $\words$.
As we will show, careful consideration of these prefix probabilities is critical for converting between our desired distributions (over words) and the ones provided by language models (over subwords).

\paragraph{Orthography.}
We assume here this language can be written, and that it has a standardised orthographic convention.
Formally, given a language's alphabet $\alphabet$, each string $\words$ can be mapped to a sequence of characters $\characters \in \alphabet^*$ via function $\!\!\wordcharsmap\!\!: \lexicon^*\!\!\to\alphabet^*$.
Further, we assume this language allows for straightforward segmentation from orthography. 
Given a sequence of characters $\characters$, we can thus extract a sequence of words as $\charwordsmap(\characters) = \words$.

\vspace{-3pt}
\section{Subwords and Language Models}
\vspace{-2pt}

\mysubword{\unmmapedsubwords}{\vocab_{\texttt{x}}}

Most modern language models are not defined directly as distributions over words $\words$, but rather as distributions over \emph{sub}words. 
These subwords are themselves defined by a choice of \defn{tokeniser}.\footnote{We are not concerned with most aspects of individual tokenisers, and will focus on general considerations here. See \citet{Mielke2021BetweenWA} for a more comprehensive discussion.}
In this section, we first introduce tokenisers, and how they map words to subwords (and back). 
We then use these building blocks to show how we can compute word probabilities from subword probabilities.

\vspace{-2pt}
\subsection{From Words to Subwords and Back}\label{sec:subword_to_word}

We define a tokeniser here as a tuple $\langle\vocab, \subwordcharsmap, \charsubwordsmap\rangle$. 
This tuple consists of: (i) a \defn{vocabulary} $\vocab$, 
whose elements are subwords $\subword \in \vocab$, each of which represents a sequence of characters $\characters \in \alphabet^*$;%
\footnote{While subwords can be mapped back to a set of characters, they need not consist of only characters from the alphabet $\alphabet$. Additional markers---such as \bow---can be used.}
(ii) a \defn{detokenisation function} $\subwordcharsmap: \vocab^* \to \alphabet^*$, which is simply a function that maps a sequence of subwords to the characters they represent and concatenates them together; (iii) a \textbf{tokenisation function} $\charsubwordsmap: \alphabet^* \to \vocab^*$, which takes as input a character sequence and maps it to a subword sequence.
Notably, multiple subword sequences may map to the same character sequence. However, most tokenisers specify one of these subword sequences as the canonical mapping and employ a deterministic tokenisation function.

Collectively, the mapping functions we have defined give us the ability to convert between words and subwords, which will be necessary when using subword distributions to compute word probabilities.
We write word-to-subword mappings as:\looseness=-1%
\begin{align}\label{eq:mapping}
    \wordsubwordsmapmath \defeq \wordcharsmapmath \bullet \charsubwordsmapmath,\quad 
    \subwordwordsmapmath \defeq \subwordcharsmapmath\bullet\charwordsmapmath
\end{align}
where $\bullet$ represents function composition. 
Importantly, these functions reverse
each other when applied as $\subwordwordsmap(\wordsubwordsmap\left(\words\right)) = \words$, but not necessarily when applied in the opposite order.
The implication of this is that each $\words$ maps to a unique $\subwords$, and every $\words$ can be represented by some $\subwords$; but there are subword sequences that will \emph{not} be mapped to by our tokenisation function. 
For example, if a tokeniser maps word \wordexample{probability} to subwords $[\wordexample{\_prob}, \wordexample{ability}]$, then the subword sequence $[\wordexample{\_p}, \wordexample{r}, \wordexample{o}, \wordexample{b}, ...]$ will never be mapped to.
We denote \defn{unmapped subword sequences} as:
\begin{align}
    \unmmapedsubwords \defeq \vocab^*\setminus\left\{\wordsubwordsmapmath\!\!\left(\words\right) \mid \words \in \lexicon^*\right\}
\end{align}

\subsection{From Word to Subword Probabilities}

Now let $\ptheta$ be a language model with parameters $\vtheta$ and a vocabulary $\vocab$.
This model defines a probability distribution over the set of all finite subword sequences $\subwords \in \vocab^*$ and its parameters are optimized to provide good estimates of the true distribution over subwords, given by:
\begin{align}\label{eq:prob_subwords_from_words}
    p(\subwords) &= \sum_{\words \in \lexicon^*} p(\words)\, \one\left\{\subwords = \wordsubwordsmapmath(\words)\right\}
\end{align}
As not all subword sequences are mapped to, and because each mapping in $\wordsubwordsmap$ is unique, we can re-write this distribution as:
\begin{align} \label{eq:prob_subwords_from_words_with_zero}
    p(\subwords) = \left\{ 
    \begin{array}{cr}
         p(\words) &\quad  \texttt{if } \subwords = \wordsubwordsmapmath(\words) \\
         0 & \texttt{if } \subwords \in \unmmapedsubwords
    \end{array}
    \right.
\end{align}

\subsection{From Subword to Word Probabilities}

\Cref{eq:prob_subwords_from_words_with_zero} suggests a way to extract probabilities over words from a language model; we can simply use the equivalence:\footnote{
Notably, to apply this equivalence in practice, one needs an \defn{exact language model}, which we define as a model $\ptheta$ with the same support as $p$, i.e., 
$\ptheta(\subwords) = 0$ when $p(\subwords) = 0$.
Note that most neural language models \emph{cannot} assign zero probability to any subword sequence due to their use of a softmax projection in the final step of computing probabilities; they will thus not be exact in this sense.
While we focus on exact language models in this paper, we note that extending our results to inexact ones simply requires marginalising out potential ambiguities, i.e., computing $p(\words)$ for a given word requires summing over the (finite) set of subword sequences which map to it \citep{cao-rimell-2021-evaluate}.}
\begin{align}
    p(\words) &= p(\subwords),\quad \texttt{for}\,\, \subwords = \wordsubwordsmapmath(\words)\label{eq:word_to_subword_prob}
\end{align}

The implication of \cref{eq:word_to_subword_prob} is that
if we can create a subword set $\subwordsset$ that is ``equivalent'' to a chosen word set $\wordsset$, we would be able to compute $\wordsset$'s probability by summing over the subwords in $\subwordsset$.
Formally, we define the set equivalence $\stringequiv$  
between two sets of sequences as:
\begin{align}\label{eq:set_equivalence_defn}
    \wordsset\stringequiv\subwordsset
    \!\implies\!
    \left(\word \!\in\! \wordsset \!\!\iff\!\!\!\!\!\wordsubwordsmapmath\!\!(\words) \!\in\! \subwordsset \right)
\end{align}
Now let $\pprefixsubwords$ be a probability function defined analogously to $\pprefixwords$ (in \cref{defn:prob_words}).
It then follows that:
\begin{align}\label{eq:seq_equivalences}
    &\pprefixwords(\wordsset)  = \pprefixsubwords(\subwordsset), 
    \quad \texttt{for}\,\,
    \wordsset \stringequiv \subwordsset
\end{align}
We are now in a position to define our quantity of interest  $p(\word \mid \words_{<t})$ in terms of subword probabilities: it is simply the quotient of $\pprefixsubwords(\cdot)$ for two different sets $\subwordsset$.

\begin{lemma} \label{lemma:probability_word_from_lm}
    The contextual probability of a word can be computed using probability distributions over subwords as:
    \begin{align}\label{eq:quotient_lm}
        p(\word \mid \words_{<t}) 
        &= \frac{\pprefixsubwords(\subwordsset')}
        {\pprefixsubwords(\subwordsset'')}
    \end{align}
    where $\subwordsset' \stringequiv \words_{< t} \circ \word \circ \lexicon^*$ and 
    $\subwordsset'' \stringequiv \words_{<t} \circ \lexicon^*$.
\end{lemma}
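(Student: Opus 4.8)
The plan is to obtain \cref{eq:quotient_lm} by combining the word-space chain-rule identity of \cref{eq:quotient} with the set-equivalence identity of \cref{eq:seq_equivalences}. \Cref{eq:quotient} already expresses $p(\word \mid \words_{<t})$ as the ratio $\pprefixwords(\words_{<t} \circ \word \circ \lexicon^*) / \pprefixwords(\words_{<t} \circ \lexicon^*)$, so it suffices to rewrite each factor in terms of subwords. By hypothesis $\subwordsset' \stringequiv \words_{<t} \circ \word \circ \lexicon^*$ and $\subwordsset'' \stringequiv \words_{<t} \circ \lexicon^*$; hence \cref{eq:seq_equivalences} gives $\pprefixwords(\words_{<t} \circ \word \circ \lexicon^*) = \pprefixsubwords(\subwordsset')$ and $\pprefixwords(\words_{<t} \circ \lexicon^*) = \pprefixsubwords(\subwordsset'')$, and substituting both into the ratio yields \cref{eq:quotient_lm}. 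At this level the lemma is a one-line corollary of \cref{eq:seq_equivalences}.

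The substance therefore lies in establishing \cref{eq:seq_equivalences}, which I would prove directly from the definitions. Expanding the right-hand side with the analogue of \cref{defn:prob_words}, $\pprefixsubwords(\subwordsset) = \sum_{\subwords \in \subwordsset} p(\subwords)$, I would partition $\subwordsset$ into sequences lying in the image of $\wordsubwordsmapmath$ and those in $\unmmapedsubwords$. \Cref{eq:prob_subwords_from_words_with_zero} assigns zero probability to every unmapped sequence, so those terms vanish, and reindexing the survivors by their unique word preimage gives
\[
\pprefixsubwords(\subwordsset) = \sum_{\words \in \lexicon^*} p\left(\wordsubwordsmapmath(\words)\right)\, \one\left\{\wordsubwordsmapmath(\words) \in \subwordsset\right\}.
\]
Applying $p(\wordsubwordsmapmath(\words)) = p(\words)$ from \cref{eq:word_to_subword_prob} and then the definition of equivalence in \cref{eq:set_equivalence_defn}, under which $\wordsubwordsmapmath(\words) \in \subwordsset \iff \words \in \wordsset$, turns this into $\sum_{\words \in \wordsset} p(\words) = \pprefixwords(\wordsset)$, as desired.

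I expect the one delicate point to be the reindexing step. Two facts must be checked there: first, that discarding the unmapped sequences is legitimate, which holds only because we restrict to exact models and invoke \cref{eq:prob_subwords_from_words_with_zero}; and second, that $\words \mapsto \wordsubwordsmapmath(\words)$ is injective, so that the mapped elements of $\subwordsset$ are in bijection with their word preimages and the sum-over-subwords and sum-over-words agree term by term with no double counting. This injectivity is exactly the left-inverse identity $\subwordwordsmap(\wordsubwordsmap\left(\words\right)) = \words$ recorded after \cref{eq:mapping}. Everything else is bookkeeping with the definitions of $\pprefixwords$, $\pprefixsubwords$, and $\stringequiv$.
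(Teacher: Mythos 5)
Your proof is correct and takes essentially the same route as the paper, whose entire proof of this lemma is the one-line application of the equivalence in \cref{eq:seq_equivalences} to the quotient in \cref{eq:quotient} that your first paragraph describes. Your remaining paragraphs fill in a justification of \cref{eq:seq_equivalences} itself (zero mass on unmapped sequences plus injectivity of $\wordsubwordsmapmath$), which the paper asserts without proof; the only slight imprecision is that for the true distribution $p$ this step needs no exactness assumption---\cref{eq:prob_subwords_from_words_with_zero} holds for $p$ by construction, and exactness of $\ptheta$ matters only when the model is substituted for $p$ in practice.
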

\begin{proof}
    This result follows from a simple application of the equivalence in \cref{eq:seq_equivalences} to the definition of $p(\word \mid \words_{<t})$ in \cref{eq:quotient}. 
\end{proof}
Luckily, it is straightforward to find the sets $\subwordsset'$ and $\subwordsset''$ required by \cref{lemma:probability_word_from_lm}.
This is because, for a given word set $\wordsset$, the subword set 
\begin{align}
    \subwordsset = \left\{\wordsubwordsmapmath(\words) \mid \words \in \wordsset\right\}
\end{align}
satisfies $\wordsset\!\stringequiv\!\subwordsset$: first, by construction, we have that $\words\mathop{\in}\wordsset\!\mathop{\implies}\!\wordsubwordsmap(\words)\mathop{\in}\subwordsset$; second, due to the injectivity of $\wordsubwordsmap$, it must be that $\wordsubwordsmap\!(\words) \mathop{\in} \subwordsset\!\!\mathop{\implies}\!\!\words \mathop{\in} \wordsset$. 
These sets thus meet the \emph{iff} criteria required by our definition in \cref{eq:set_equivalence_defn}.

Before making use of \cref{eq:quotient_lm} for computing contextual probabilities, however, there is still one hurdle to overcome: the two sets $\wordsset' = (\words_{< t} \circ \word \circ \lexicon^*)$ and $\wordsset'' = (\words_{< t} \circ \lexicon^*)$ are infinite.
We must thus find a more efficient strategy to compute these probabilities than summing over the (also infinite) sets $\subwordsset'$ and $\subwordsset''$.

\subsection{Leveraging LMs' Autoregressiveness}

We now discuss how we can leverage the fact that most LMs compute probabilities autoregressively to efficiently compute the probabilities 
in \Cref{lemma:probability_word_from_lm}.
In short, most LMs provide estimates of conditional probabilities: $p(\subword \mid \subwords_{<t})$.
Given \cref{eq:quotient} and the fact that $\pprefixsubwords(\vocab^*) = 1$,
we can use these conditionals to compute prefix probabilities efficiently.
\begin{lemma}\label{lemma:chain_rule}
    Prefix probabilities can be computed using conditional probabilities as:
    \begin{align}\label{eq:prefix_equals_cond}
        \pprefixsubwords(\subwords \mathop{\circ} \vocab^*) 
        \mathop{=} \prod_{t=1}^{|\subwords|}\! \frac{\pprefixsubwords(\subwords_{<t} \mathop{\circ} \subword_t \mathop{\circ} \vocab^*)}{\pprefixsubwords(\subwords_{<t} \mathop{\circ} \vocab^*)}
        \mathop{=} \prod_{t=1}^{|\subwords|} p(\subword_{t} \mathop{\mid} \subwords_{<t})
    \end{align}
\end{lemma}

It follows that if we can find a set of subword sequences $\subwordsset = \{\subwords^{(k)}\}_{k=1}^{K}$ for which we have the equivalence 
$\words \circ \lexicon^* \stringequiv \bigcup_{\subwords \in \subwordsset} \subwords \circ \vocab^*$,
then we can compute prefix probabilities as:\footnote{In practice, we also need these prefix sets to be disjoint: $(\subwords \circ \vocab^*) \cap (\subwords' \circ \vocab^*) = \emptyset$ for $\subwords, \subwords' \in \subwordsset$. This will be the case whenever no $\subwords \in \subwordsset$ is a prefix of another $\subwords' \in \subwordsset$ (i.e., $\subwords' \notin \subwords\circ\vocab^*$). If there is an $\subwords$ which is a prefix of $\subwords'$, however, we can easily find a new set which still satisfies the equivalence above by dropping $\subwords'$ from $\subwordsset$. %
}
\begin{align}\label{eq:prefix_probs}
    \pprefixsubwords\Bigg(\bigcup\limits_{\subwords \in \subwordsset} \subwords \circ \vocab^* \Bigg) 
    &= \sum_{\subwords \in \subwordsset} \pprefixsubwords(\subwords \circ \vocab^*)
\end{align}
In turn, these let us compute $p(\word \mid \words_{<t})$ 
efficiently through \cref{eq:quotient_lm}.
For most tokenisers, finding a set $\subwordsset$ for which the equivalence 
$\words \circ \lexicon^* \stringequiv \bigcup_{\subwords \in \subwordsset} \subwords \circ \vocab^*$ holds is not actually possible due to the existence of unmapped sequences in $\subwords \circ \vocab^*$;
unmapped sequences, however, have zero probability and including them in $\subwordsset'$ or $\subwordsset''$ does not affect the equality in \cref{eq:quotient_lm}. 
We thus ignore this issue in our exposition, while still considering it in our theorem proofs.
We now outline tokeniser-specific considerations which influence how to choose these sets.\looseness=-1

\begin{figure*}
    \resizebox{0.94\linewidth}{!}{
    \begin{minipage}{\linewidth}
    \begin{align}
        &\wordsubwordsmapmath(\textwords{How} \circ \textwords{do} \circ \textwords{you} \circ \textwords{compute} \circ \textwords{a} \circ \textwords{word} \circ \textwords{'s} \circ \textwords{probability} \circ \textwords{?})
        \!\!\!\!\!\!\!\!\!\!\!\!\!\!\!\!\!
        \!\!\!\!\!\!\!\!\!\!\!\!\!\!\!\!\!\!\!\!\!\!\!\!\!\!\!\!\!\!\!\!\!\!\!\!
        \!\!\!\!\!\!\!\!\!\!\!\!\!\!\!\!\!\!\!\!\!\!\!\!\!\!\!\!\!\!\!\!\!\!\!\!
        \!\!\!\!\!\!\!\!\!\!\!
        & \mathcomment{\eow-marked, split punctuation} \nonumber \\
        &\begin{array}{ccccccccccccccccccc}
             & = & \wordsubwordsmaplargeequation(\textwords{How}) & \circlargeequation &  \wordsubwordsmaplargeequation(\textwords{do}) &\circlargeequation &\wordsubwordsmaplargeequation(\textwords{you}) & \circlargeequation &  \wordsubwordsmaplargeequation(\textwords{compute}) & \circlargeequation &  \wordsubwordsmaplargeequation(\textwords{a}) & \circlargeequation &  \wordsubwordsmaplargeequation(\textwords{word})  &  \circlargeequation  & \wordsubwordsmaplargeequation(\textwords{'s})  &  \circlargeequation  & \wordsubwordsmaplargeequation(\textwords{probability})  &  \circlargeequation  & \wordsubwordsmaplargeequation(\textwords{?}) \\
             & = & \texttokenised{How\_} & \circlargeequation &  \texttokenised{do\_} & \circlargeequation &  \texttokenised{you\_}&\circlargeequation  & \texttokenised{comp} \circ \texttokenised{ute\_} & \circlargeequation  & \texttokenised{a\_} & \circlargeequation  & \texttokenised{word} & \circlargeequation & \texttokenised{'s\_} & \circlargeequation  & \texttokenised{prob} \circ \texttokenised{ability} & \circlargeequation & \texttokenised{?}
        \end{array} \nonumber 
        \!\!\!\!\!\!\!\!\!\!\!\!\!\!\!\!\!\!\!\!\!\!\!\!\!\!\!\!\!\!\!\!\!\!\!\!
        \!\!\!\!\!\!\!\!\!\!\!\!\!\!\!\!\!\!\!\!\!\!\!\!\!\!\!\!\!\!\!\!\!\!\!\!
        \!\!\!\!\!\!\!\!\!\!\!\!\!\!\!\!\!\!\!\!\!\!\!\!\!\!\!\!\!\!\!\!\!\!\!\!
        \\
        ~ \nonumber\\
        &\wordsubwordsmapmath(\textwords{How} \circ \textwords{do} \circ \textwords{you} \circ \textwords{compute} \circ \textwords{a} \circ \textwords{word} \circ \textwords{'s} \circ \textwords{probability} \circ \textwords{?})
        \!\!\!\!\!\!\!\!\!\!\!\!\!\!\!\!\!\!\!\!
        \!\!\!\!\!\!\!\!\!\!\!\!\!\!\!\!\!\!\!\!\!\!\!\!\!\!\!\!\!\!\!\!\!\!\!\!
        \!\!\!\!\!\!\!\!\!\!\!\!\!\!\!\!\!\!\!\!\!\!\!\!\!\!\!\!\!\!\!\!\!\!\!\!
        \!\!\!\!\!\!\!\!\!\!\!
        & \mathcomment{\bow-marked, split punctuation} \nonumber \\
        &\begin{array}{ccccccccccccccccccc}
             & = & \wordsubwordsmaplargeequation(\textwords{How}) & \circlargeequation &  \wordsubwordsmaplargeequation(\textwords{do}) &\circlargeequation &\wordsubwordsmaplargeequation(\textwords{you}) & \circlargeequation &  \wordsubwordsmaplargeequation(\textwords{compute}) & \circlargeequation &  \wordsubwordsmaplargeequation(\textwords{a}) & \circlargeequation &  \wordsubwordsmaplargeequation(\textwords{word})  &  \circlargeequation  & \wordsubwordsmaplargeequation(\textwords{'s})  &  \circlargeequation  & \wordsubwordsmaplargeequation(\textwords{probability})  &  \circlargeequation  & \wordsubwordsmaplargeequation(\textwords{?}) \\
             & = & \texttokenised{How} & \circlargeequation &  \texttokenised{\_do} & \circlargeequation &  \texttokenised{\_you}&\circlargeequation  & \texttokenised{\_comp} \circ \texttokenised{ute} & \circlargeequation  & \texttokenised{\_a} & \circlargeequation  & \texttokenised{\_word} & \circlargeequation & \texttokenised{'s} & \circlargeequation  & \texttokenised{\_prob} \circ \texttokenised{ability} & \circlargeequation & \texttokenised{?}
        \end{array} \nonumber
        \!\!\!\!\!\!\!\!\!\!\!\!\!\!\!\!\!\!\!\!\!\!\!\!\!\!\!\!\!\!\!\!\!\!\!\!
        \!\!\!\!\!\!\!\!\!\!\!\!\!\!\!\!\!\!\!\!\!\!\!\!\!\!\!\!\!\!\!\!\!\!\!\!
        \!\!\!\!\!\!\!\!\!\!\!\!\!\!\!\!\!\!\!\!\!\!\!\!
    \end{align}%
    \end{minipage}
    }
    \caption{The output of tokenisers with different methods of handling word delineations.}
    \label{fig:tokenisers_output}
\end{figure*}

\section{The Nuances of Mapping: Tokeniser-dependent Strategies}\label{sec:per_tokeniser_strategies}

We are left with the task of finding a set of subword prefixes which will allow us to compute the probabilities of $\subwordsset' \stringequiv \wordsset'$ and $\subwordsset'' \stringequiv \wordsset''$.
In this section, we discuss how our tokeniser---specifically whether it uses end- or beginning-of-word markings in its vocabulary---affects this task.

\subsection{Segmentation-compatible Tokenisers}

In the following sections, we consider $\wordsubwordsmap$ that operate independently over words in a sequence $\words$.
This is necessary for our methods below, and is a common practice in NLP (typically called pre-tokenisation) where a text is segmented according to some criterion (e.g., white space) before being converted into subwords by a tokeniser.
Here, we consider pre-tokenisation to be one of the steps implemented by $\wordsubwordsmap$. 
We formalise this in the following definition.

\begin{defin}\label{defn:pretokenised_tokeniser}
    We define a \defn{segmentation-compatible tokeniser} as one whose operations can be decomposed across words in a sequence, i.e.:
    \begin{align} \label{eq:decompose_tokenisation}
        \!\!\!\wordsubwordsmapmath\!\!(\words) \! 
        &= \!\!\wordsubwordsmapmath\!\!(\words_{<t}) \mathop{\circ} \!\!\wordsubwordmapmath\!\!(\word_{t}) \mathop{\circ} \!\!\wordsubwordsmapmath\!\!(\words_{> t}) \\
        &= \!\!\wordsubwordmapmath\!\!(\word_1) \mathop{\circ} \!\!\wordsubwordmapmath\!\!(\word_2) \mathop{\circ} \cdots \mathop{\circ} \!\!\wordsubwordmapmath\!\!(\word_{|\words|}) \nonumber
    \end{align}
\end{defin}

While it is possible to create tokenisers with vocabularies in which subwords can cross word boundaries, the majority of them meet \cref{defn:pretokenised_tokeniser}.\footnote{E.g., the \texttt{sentencepiece} library \citep{kudo-richardson-2018-sentencepiece} has an option which allows multi-word subwords to be added to a tokeniser's vocabulary; by default, though, this option is disabled and it does not consider tokens of this format.}

The decomposition in \cref{defn:pretokenised_tokeniser} has an important implication.
As discussed in \cref{sec:subword_to_word}, the (sequence-level) tokenisation function $\!\wordsubwordsmap\!$ must be injective, meaning that each word sequence must map to a unique subword sequence; 
this, in turn, implies that concatenating the outputs of $\!\wordsubwordmap\!$ should always result in unique subword sequences.
This property is known in the compression literature as unique decodability 
\citep[page 105]{cover1991elements}.
At an intuitive level, we can see why this is a desirable property of a tokenisation function: when working with NLP models, we want to be able to deterministically map a sequence of subwords to a sequence of words.
A relatively simple strategy to ensure unique decodability, which is used by the majority of tokenisers, is to mark either the ends or beginnings of words (\eow or \bow) using a subset of the subwords in $\vocab$.
We discuss these strategies next.\looseness=-1

\newcommand{\charmapfuncwithoverscript}[2]{\underset{\scaleto{#1}{4pt}}{\charmap^{#2}}}
\newcommand{\wordsubwordmapeow}{\charmapfuncwithoverscript{\lexicon\to\vocab^*}{\eow}}
\newcommand{\wordsubwordmapbow}{\charmapfuncwithoverscript{\lexicon\to\vocab^*}{\bow}}
\newcommand{\wordsubwordmapmid}{\charmapfuncwithoverscript{\lexicon\to\vocab^*}{\midstring}}

\subsection{End of Word Markings}

We now consider \eow-marking tokenisers.
These tokenisers use a subset of their vocabulary $\vocabeow \subseteq \vocab$ to indicate the end of words,\footnote{The case of $\vocabeow = \vocab$ or $\vocabbow = \vocab$ happens when $\vocab=\lexicon$; while possible in theory, it will not happen in practice since a language model cannot have an infinite vocabulary.\label{footnote}}
with the rest of the vocabulary $\vocabmid \defeq \vocab \setminus\vocabeow$ mapping back to the beginning or middle of words.

\begin{defin}\label{defn:eow_tokeniser}
    An \defn{\eow-marking tokeniser} is a segmentation-compatible tokeniser which marks ends of words.
    Its word-level tokenisation function can be written as $\wordsubwordmapeow: \lexicon \to \vocabmid^{*} \circ \vocabeow$.\footnote{Note that only subword sequences ending in $\subword \in \vocabeow$ or the empty sequence (i.e., $\emptystring$) are valid under this tokeniser.
    This is because: $
    \bigcup_{i=0}^{\infty} (\vocabmid^{*}\mathop{\circ}\vocabeow)^{i} \subseteq
    \{\emptystring\}\mathop{\cup}(\vocab^{*}\mathop{\circ}\vocabeow)$.
    }
\end{defin}

Importantly, given the definition above,
when a subword $\subword_{t} \in \vocabeow$ is observed, 
it means that the current subsequence $\subwords_{\scaleto{t':t}{5pt}}$ (where $t'\!\leq\!t$) 
can be mapped back to a word, and that a subsequence representing a new word will begin at $\subword_{\scaleto{t+1}{5pt}}$.
(The current subsequence $\subwords_{\scaleto{t':t}{5pt}}$ is thus determined by the smallest $t'$ for which $\subwords_{\scaleto{t':t-1}{5pt}}\!\in\!\vocabmid^*$; note that this means either $t'=1$ or $\subword_{\scaleto{t'-1}{5pt}} \in \vocabeow$.)  
This property implies that \eow-marking tokenisers provide \defn{instantaneous decodability}
\citep[page 106]{cover1991elements}:
prefix $\subwords_{\scaleto{\leq t}{5pt}}$ with $\subword_{t} \in \vocabeow$ is instantaneously decodable, as it always maps to the same words, regardless of its continuation $\subwords_{\scaleto{> t}{5pt}}$.
Instantaneous decodability allows us to compute the contextual probability of a word as follows.

\begin{restatable}{theorem}{probofeowtokeniser} \label{thm:probofeowtokeniser}
    Let $\wordsubwordsmap$ be a \eow-marking tokeniser.
    Further, let $\subwordschosen \defeq \wordsubwordsmap(\words)$.
    We can show the following equivalence:
    \begin{align}\label{eq:eow_equivalence}
        &\pprefixwords(\words_{<t} \circ \lexicon^*) = \pprefixsubwords(\subwordschosenprefix \circ \vocab^*) \\
        &\pprefixwords(\words_{<t} \circ \word \circ \lexicon^*) = \pprefixsubwords(\subwordschosenprefix \circ \subwordschosenword \circ \vocab^*) \nonumber
    \end{align}
    Further, we can compute a word's probability as:
    \begin{align} \label{eq:prob_eow_tokeniser}
        p(\word \mid \words_{<t}) 
        &= \underbrace{\prod\limits_{t'=1}^{\left|\subwordschosenword\right|} p\left(\subwordschosenword_{t'} \mid \subwordschosenprefix \circ \subwordschosenword_{<t'}\right)}_{p\left(\subwordschosenword \mid \subwordschosenprefix\right)}
    \end{align}
\end{restatable}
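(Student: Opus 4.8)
The plan is to prove the theorem in two stages: first establish the two set equivalences (in the $\stringequiv$ sense), then invoke the prefix-probability equality of \cref{eq:seq_equivalences} together with the chain rule of \cref{lemma:chain_rule} to extract the closed form. Concretely, I would show
\[
\words_{<t} \circ \lexicon^* \stringequiv \subwordschosenprefix \circ \vocab^*
\quad\text{and}\quad
\words_{<t} \circ \word \circ \lexicon^* \stringequiv \subwordschosenprefix \circ \subwordschosenword \circ \vocab^*,
\]
after which the two lines of \cref{eq:eow_equivalence} follow immediately, since \cref{eq:seq_equivalences} converts any set equivalence into an equality of $\pprefixwords$ and $\pprefixsubwords$.

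For each equivalence I must verify both directions of the biconditional in \cref{eq:set_equivalence_defn}, i.e.\ for arbitrary $\words' \in \lexicon^*$ that $\words' \in \words_{<t} \circ \lexicon^* \iff \wordsubwordsmap(\words') \in \subwordschosenprefix \circ \vocab^*$. The forward direction is the easy one: if $\words' = \words_{<t} \circ \words''$, then segmentation-awareness (\cref{defn:pretokenised_tokeniser}) gives $\wordsubwordsmap(\words') = \wordsubwordsmap(\words_{<t}) \circ \wordsubwordsmap(\words'') = \subwordschosenprefix \circ \wordsubwordsmap(\words'')$, which has $\subwordschosenprefix$ as a prefix and hence lies in $\subwordschosenprefix \circ \vocab^*$. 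The same decomposition handles the second equivalence, now with $\subwordschosenprefix \circ \subwordschosenword = \wordsubwordsmap(\words_{<t} \circ \word)$ playing the role of the prefix.

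The backward direction is where the \eow assumption does the real work, and I expect it to be the main obstacle. Here I would use that $\subwordschosenprefix = \wordsubwordsmap(\words_{<t})$ terminates in a $\vocabeow$ token: every word's tokenisation ends in $\vocabeow$ by \cref{defn:eow_tokeniser}, so the concatenation does too. Instantaneous decodability then applies, namely that a prefix ending in a $\vocabeow$ token decodes to the same words irrespective of its continuation. Thus, for any $\words'$ with $\wordsubwordsmap(\words') = \subwordschosenprefix \circ \subwords''$, the fact that $\subwordwordsmap$ reverses $\wordsubwordsmap$ (so $\words' = \subwordwordsmap(\wordsubwordsmap(\words'))$) yields a word sequence whose first $t-1$ words are exactly $\subwordwordsmap(\subwordschosenprefix) = \words_{<t}$, giving $\words' \in \words_{<t} \circ \lexicon^*$. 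The only care needed is the degenerate case in which $\words_{<t}$ is empty, where $\subwordschosenprefix$ is the empty string and both sides reduce trivially; the argument for the $\subwordschosenprefix \circ \subwordschosenword$ equivalence is identical, now using that $\subwordschosenword$ itself ends in a $\vocabeow$ token.

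Finally, to obtain \cref{eq:prob_eow_tokeniser} I would combine the pieces. Substituting the two equivalences into the quotient definition (\cref{eq:quotient}, via \cref{lemma:probability_word_from_lm}) gives $p(\word \mid \words_{<t}) = \pprefixsubwords(\subwordschosenprefix \circ \subwordschosenword \circ \vocab^*)/\pprefixsubwords(\subwordschosenprefix \circ \vocab^*)$. Expanding both prefix probabilities with \cref{lemma:chain_rule} turns each into a product of autoregressive conditionals over its subwords; since the numerator's sequence extends the denominator's by exactly $\subwordschosenword$, the shared factors over $\subwordschosenprefix$ cancel, leaving $\prod_{t'=1}^{|\subwordschosenword|} p(\subwordschosenword_{t'} \mid \subwordschosenprefix \circ \subwordschosenword_{<t'})$, as claimed. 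This telescoping is routine once the set equivalences are in hand.
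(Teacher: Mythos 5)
Your proposal is correct and takes essentially the same route as the paper: the paper's own proof likewise establishes the two set equivalences by using segmentation-awareness (\cref{defn:pretokenised_tokeniser}) for the forward inclusion and the fact that $\subwordschosenprefix$ ends in a $\vocabeow$ subword---so detokenisation decomposes at that boundary---for the backward direction, and then derives \cref{eq:prob_eow_tokeniser} from the quotient of \cref{lemma:probability_word_from_lm}, the chain rule of \cref{lemma:chain_rule}, and telescoping cancellation. The only cosmetic difference is that where you invoke \cref{eq:seq_equivalences} as a black box after verifying the biconditional, the paper re-derives that step inline, splitting $\pprefixsubwords(\subwordschosenprefix \circ \vocab^*)$ into the image of $\words_{<t} \circ \lexicon^*$ under the tokeniser plus a remainder of unmapped sequences, which have probability zero by exactness.
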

\begin{proof}
    See \cref{app:proof_theorem_probofeowtokeniser} for formal proof.
\end{proof}

\Cref{eq:eow_equivalence} follows from instantaneous decodability, as every sequence $\subwords \in \subwordschosen\circ\vocab^*$ 
maps back to $\words \circ \lexicon^*$.
\Cref{eq:prob_eow_tokeniser} then follows from a simple application of \cref{lemma:probability_word_from_lm,lemma:chain_rule}:
\begin{align}
    p\left(\subwordschosenword \mid \subwordschosenprefix\right) 
    = \frac{\prod_{t'=1}^{\left|\subwordschosenfull\right|} p\left(\subwordschosenfull_{t'} \mid \subwordschosenfull_{<t'}\right)}{\prod_{t'=1}^{\left|\subwordschosenprefix\right|} p\left(\subwordschosenprefix_{t'} \mid \subwordschosenprefix_{<t'}\right)}
\end{align}
Notably, \cref{eq:prob_eow_tokeniser} is fairly straightforward and is how most NLP practitioners would compute a word's probability.
In the next section, however, we see that it would not compute the correct probabilities if using \bow-marking tokenisers.

\begin{figure*}[t]
    \centering
\begin{tcolorbox}[colback=cyan!5!white,colframe=cyan!75!black,left=4pt,title=\!\!Worked Example: {\small Contextual probability computation for word  \textwords{mark} using a \bow-marking tokeniser}]
    \vspace{-5pt}
    {\small
    \textcolor{gray}{Let ``$\textwords{She saw the mark...} $'' be our context of interest; we thus have $\words = \langle  \textwords{She},  \textwords{saw},  \textwords{the},  \textwords{mark},...\rangle$. 
    Further, let $\ptheta$ be our language model with vocabulary:}
    \vspace{-5pt}
    \begin{align*}
         \vocab = \big\{\texttokenised{\_a}, \texttokenised{\_an}, \texttokenised{\_mark}, \texttokenised{\_saw}, \texttokenised{\_She},  \texttokenised{\_the},  \texttokenised{er}, \texttokenised{tion}, \texttokenised{ing}, \texttokenised{ed}\big\}
     \end{align*}
     \textcolor{gray}{Let's assume that we are interested in estimating  $p(\textwords{mark} \mid  \langle  \textwords{She},  \textwords{saw},  \textwords{the}\rangle)$ using $\ptheta$. To employ \cref{eq:quotient_lm}, we must compute $\pprefixsubwords(\subwordsset)$ for} 
     $\subwordsset' \stringequiv \langle  \textwords{She},  \textwords{saw},  \textwords{the}\rangle \circ \textwords{mark} \circ \lexicon^*$ \textcolor{gray}{and} 
     $\subwordsset'' \stringequiv \langle  \textwords{She},  \textwords{saw},  \textwords{the}\rangle \circ \lexicon^*$.
     \textcolor{gray}{For vocabularies derived using \bow-marking tokenisers, \cref{thm:probofbowtokeniser} states that we should use:} 
     \vspace{-5pt}
     \begin{align*}
        &\subwordsset' = \texttokenised{\_She}\circ \texttokenised{\_saw}\circ \texttokenised{\_the}\circ \texttokenised{\_mark}
        \circ\, \vocabboweospluscont \\
        &\subwordsset'' = \texttokenised{\_She}\circ \texttokenised{\_saw}\circ \texttokenised{\_the}
        \circ\, \vocabboweospluscont
    \end{align*}
    \vspace{-5pt}
    \textcolor{gray}{where $\vocab_{\texttt{bow}} = \{\texttokenised{\_a}, \texttokenised{\_an}, \texttokenised{\_mark}, \texttokenised{\_saw}, \texttokenised{\_She},\texttokenised{\_the}\}$.
    Using this theorem's \cref{eq:prob_bow_tokeniser} we arrive at:}
    \vspace{-5pt}
    \begin{align*}
        p(\textwords{mark} \mid \! \langle  \textwords{She},  \textwords{saw},  \textwords{the}\rangle) = 
        \ptheta(\texttokenised{\_mark} &\mid\!  \langle  \texttokenised{\_She},  \texttokenised{\_saw},  \texttokenised{\_the}\rangle) \cdot\frac{\!\sum_{\{\subword\in\vocabboweos\}}\! \ptheta\left(\subword \mathop{\mid}  \langle  \texttokenised{\_She},  \texttokenised{\_saw},  \texttokenised{\_the}\rangle \circ\texttokenised{\_mark}\right)}
        {\sum_{\{\subword\in\vocabboweos\}}\! \ptheta\left(\subword \mathop{\mid}  \langle  \texttokenised{\_She},  \texttokenised{\_saw},  \texttokenised{\_the}\rangle\right)}
    \end{align*}
    \textcolor{gray}{Note that this computation specifically discounts the probabilities $p(\textwords{marker} \!\mid\!  \langle  \textwords{She},  \textwords{saw},  \textwords{the}\rangle)$, $p(\textwords{marktion} \!\mid\!  \langle  \textwords{She},  \textwords{saw},  \textwords{the}\rangle)$, $p(\textwords{markerer} \!\mid \! \langle  \textwords{She},  \textwords{saw},  \textwords{the}\rangle)$, etc., which otherwise would have incorrectly counted towards our estimate of $p(\textwords{mark} \mid \! \langle  \textwords{She},  \textwords{saw},  \textwords{the}\rangle)$.}
    }
\end{tcolorbox}
    \vspace{-10pt}
    \caption{
    Example for computing a word's probability using a LM over subwords defined by a \bow-marked tokeniser.
    }\label{fig:example}
    \vspace{-10pt}
\end{figure*}

\subsection{Beginning of Word Markings}\label{sec:bow}

We now consider \bow-marking tokenisers.
Analogously to the \eow case, a subset of a \bow-marking tokeniser's vocabulary $\vocabbow\subseteq \vocab$  is used exclusively to indicate word beginnings.
The rest of the vocabulary $\vocabmid \defeq \vocab \setminus\vocabbow$ then represents either the middle or end of words.
We provide a formal definition of this tokeniser below.\looseness=-1

\begin{defin}\label{defn:bow_tokeniser}
    A \defn{\bow-marking tokeniser} is a segmentation-compatible tokeniser which marks beginnings of words.
    Its word-level tokenisation function is written as 
    $\wordsubwordmapbow: \lexicon \mathop{\to} \vocabbow \mathop{\circ} \vocabmid^{*}$.\footnote{Similarly to with \eow, not all subword sequences are valid under \bow tokenisers, only sequences in $\{\emptystring\} \mathop{\cup} (\vocabbow \circ \vocab^{*})$.}
\end{defin}

Given the definition above, when a subword $\subword_{t} \in \vocabbow$ is observed, it thus means that a \emph{previous} subsequence $\subwords_{\scaleto{t':t-1}{5pt}}$
can be mapped back to a word, and that a subsequence representing a new word begins at $\subword_{\scaleto{t}{5pt}}$.
(The previous subsequence $\subwords_{\scaleto{t':t-1}{5pt}}$ is determined by $\subword_{\scaleto{t'}{5pt}} \in \vocabbow$ and $\subwords_{\scaleto{t'+1:t-1}{5pt}} \in \vocabmid^*$.)
Such tokenisers are thus \emph{not} instantaneously decodable. 
They only provide what we term \defn{near-instantaneous decodability}:
a prefix $\subwords_{\scaleto{\leq t}{5pt}}$ 
does \emph{not} always map to the same words, as its mapping depends on whether the following subword $\subword_{\scaleto{t+1}{5pt}}$ is in $\vocabbow\cup\{\eos\}$.\footnote{
Here, we define the concatenation of any sequence with \eos to be itself, e.g., $\subwords\circ\eos = \subwords$.
} 
Computing probabilities with near-instantaneous codes thus requires discounting the probability of continuations $\subword_{\scaleto{t+1}{5pt}} \notin \vocabbow\cup\{\eos\}$; we label this discount factor as \newbugfixcounter{bugfix:bow}.

\begin{restatable}{theorem}{probofbowtokeniser} \label{thm:probofbowtokeniser}
    Let $\wordsubwordsmap$ be a \bow-marking tokeniser.
    Further, let 
    $\overline{\,\cdot\,}$ represent the union of a set with $\eos$, e.g.,
    $\vocabboweos = \vocabbow \cup \{\eos\}$.
    We can show the following equivalence:
    \begin{align}\label{eq:bow_prob_equivalence}
        &\pprefixwords(\words_{<t} \circ \lexicon^*) = \pprefixsubwords(\subwordschosenprefix \circ \vocabboweospluscont) \\
        &\pprefixwords(\words_{<t} \circ \word \circ \lexicon^*) = \pprefixsubwords(\subwordschosenprefix \circ \subwordschosenword \circ \vocabboweospluscont) \nonumber
    \end{align}
    Further, we can compute a word's probability as:
    \begin{align} \label{eq:prob_bow_tokeniser}
        &p(\word \mid \words_{<t}) = \\
        &\,\underbrace{
        \prod\limits_{t'=1}^{\left|\subwordschosenword\right|}\! p\left(\subwordschosenword_{t'} \mathop{\mid} \subwordschosenprefix \mathop{\circ} \subwordschosenword_{<t'}\right)}_{p\left(\subwordschosenword \mathop{\mid} \subwordschosenprefix\right)}
        \underbrace{
        \frac{\!\sum_{\{\subword\in\vocabboweos\}}\! p\left(\subword \mathop{\mid} \subwordschosenprefix\!\circ\!\subwordschosenword\right)}
        {\sum_{\{\subword\in\vocabboweos\}}\! p\left(\subword \mathop{\mid} \subwordschosenprefix\right)}}_{\text{\usebugfixcounter{bugfix:bow}}} \nonumber
    \end{align}
\end{restatable}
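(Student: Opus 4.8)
The plan is to prove the two claims in sequence: first establish the prefix-probability equivalences in \cref{eq:bow_prob_equivalence}, and then derive the closed form for $p(\word \mid \words_{<t})$ in \cref{eq:prob_bow_tokeniser} by combining these equivalences with \cref{lemma:probability_word_from_lm,lemma:chain_rule}. The overall skeleton mirrors the \eow proof (\cref{thm:probofeowtokeniser}), with the crucial difference that \bow tokenisers are only \emph{near}-instantaneously decodable, which is precisely what forces the appearance of the \usebugfixcounter{bugfix:bow} discount factor.

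For the first claim, I would prove the set equivalence $\words_{<t} \circ \lexicon^* \stringequiv \subwordschosenprefix \circ \vocabboweospluscont$ (and analogously for the $\words_{<t}\circ\word$ case), from which \cref{eq:bow_prob_equivalence} follows immediately by \cref{eq:seq_equivalences}. By the definition in \cref{eq:set_equivalence_defn}, this amounts to showing $\words \in \words_{<t}\circ\lexicon^*$ iff $\wordsubwordsmapmath(\words) \in \subwordschosenprefix \circ \vocabboweospluscont$. The forward direction is the easier one: if $\words = \words_{<t} \circ \words'$, segmentation-awareness (\cref{defn:pretokenised_tokeniser}) gives $\wordsubwordsmapmath(\words) = \subwordschosenprefix \circ \wordsubwordsmapmath(\words')$, and since every word tokenises to a string in $\vocabbow \circ \vocabmid^*$ (\cref{defn:bow_tokeniser}), the suffix $\wordsubwordsmapmath(\words')$ is either empty (handled by the $\eos$ element, via the convention $\subwords\circ\eos = \subwords$) or begins with a \bow subword, placing $\wordsubwordsmapmath(\words)$ in $\subwordschosenprefix \circ \vocabboweospluscont$.

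The converse direction is where the real work lies, and I expect it to be the main obstacle. Given $\wordsubwordsmapmath(\words) \in \subwordschosenprefix \circ \vocabboweospluscont$, I must argue that the prefix $\subwordschosenprefix$ decodes back to exactly $\words_{<t}$, so that $\words \in \words_{<t}\circ\lexicon^*$. This is not automatic: $\subwordschosenprefix$ ends with the subwords of $\word_{t-1}$, which lie in $\vocabbow\circ\vocabmid^*$, so a priori a continuation subword in $\vocabmid$ could merge with $\word_{t-1}$ and cause $\subwordschosenprefix$ to decode to a \emph{different} final word. The key is exactly near-instantaneous decodability: by construction the subword immediately following $\subwordschosenprefix$ lies in $\vocabbow\cup\{\eos\}$, which signals a fresh word boundary and thus pins the decoding of $\subwordschosenprefix$ to $\words_{<t}$. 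I would make this precise by invoking the unique decomposition of any valid \bow sequence into maximal blocks of the form $\vocabbow\circ\vocabmid^*$, each corresponding to one word.

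Finally, for \cref{eq:prob_bow_tokeniser}, I would substitute the equivalences of \cref{eq:bow_prob_equivalence} into \cref{lemma:probability_word_from_lm} to write $p(\word\mid\words_{<t})$ as the quotient $\pprefixsubwords(\subwordschosenprefix \circ \subwordschosenword \circ \vocabboweospluscont) / \pprefixsubwords(\subwordschosenprefix \circ \vocabboweospluscont)$. Splitting $\vocabboweospluscont = \{\eos\} \cup (\vocabbow\circ\vocab^*)$ and using that the prefix sets $\subwordschosenprefix\circ\subword\circ\vocab^*$ are disjoint across $\subword\in\vocabboweos$, \cref{lemma:chain_rule} lets me expand each prefix probability as $\pprefixsubwords(\subwordschosenprefix\circ\vocab^*)\sum_{\subword\in\vocabboweos} p(\subword\mid\subwordschosenprefix)$, and likewise for the numerator. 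The raw ratio $\pprefixsubwords(\subwordschosenprefix\circ\subwordschosenword\circ\vocab^*)/\pprefixsubwords(\subwordschosenprefix\circ\vocab^*)$ then collapses to $p(\subwordschosenword\mid\subwordschosenprefix)$ by \cref{lemma:chain_rule}, while the two $\vocabboweos$ sums form exactly the \usebugfixcounter{bugfix:bow} factor, yielding \cref{eq:prob_bow_tokeniser}.
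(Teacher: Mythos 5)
Your proposal is correct and takes essentially the same route as the paper: its \cref{lemma:prefix_prob_equivs_bow_tokeniser} proves the set equivalences exactly as you outline (forward inclusion from segmentation-awareness and \cref{defn:bow_tokeniser}, converse from the fact that a following subword in $\vocabboweos$ pins the word boundary so detokenisation decomposes), and the theorem then follows by splitting each side of the quotient into disjoint prefix sets $\subwordschosenprefix \circ \subword \circ \vocab^*$ over $\subword \in \vocabboweos$, applying \cref{lemma:chain_rule}, and cancelling---precisely your final step. The only cosmetic difference is that you invoke \cref{eq:seq_equivalences} directly once the set equivalence is established, whereas the paper re-derives that probability identity inline by showing the residual sequences of $\subwordschosenprefix \circ \vocabboweospluscont$ not in the tokeniser's image are unmapped and hence carry zero probability under an exact model.
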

\begin{proof}
    See \cref{app:proof_theorem_probofbowtokeniser} for formal proof.
\end{proof}

\Cref{eq:bow_prob_equivalence} follows from near-instantaneous decodability, as
every sequence $\subwordschosen\circ\vocabboweospluscont$ maps back to $\words \circ \lexicon^*$,
but sequences in $\subwordschosen\circ\vocabmid\circ\vocab^*$ do not.
\Cref{fig:tokenisers_output} contains an example of a sequence tokenised using either \eow- or \bow-marking tokenisers; \Cref{fig:example} contains an example motivating \usebugfixcounter{bugfix:bow}.

\subsection{Practical Concerns and Corner Cases}\label{sec:corners}

In this section, we discuss corner cases that deserve special consideration.
Many of these cases arise because of practical demands, e.g., ensuring the presence or absence of white space where appropriate. %
Notably, the need for these corner cases is often language-dependent, as they arise due to orthographic conventions. 
We discuss the implications of two tokeniser conventions that handle special cases: the treatment of the beginnings and ends of sequences. 

\newcommand{\charpunct}{\characterspace_{\texttt{!?}}}
\newcommand{\vocabpunct}{\vocab_{\texttt{!?}}}
\newcommand{\vocabpuncteos}{\vocabeos_{\texttt{!?}}}
\paragraph{Non-\eow-marked Final Words.}
Several \eow-marking tokenisers do not decompose exactly as in \cref{eq:decompose_tokenisation}, but treat the final word in a sequence differently.
Specifically, they override the behaviour of $\wordsubwordmap$ on these words and do \emph{not} use subwords from $\vocabeow$ to mark its ends.
This is also often the treatment applied to words followed immediately by punctuation.
This mechanism allows tokenisers to avoid implying the existence of a white space that does not exist, e.g., at the end of a string.
Notably, this breaks instantaneous decodability, making this code only near-instantaneous.
A simple example demonstrates this fact: let $\subwordschosenword_{\midstring} \defeq \wordsubwordmapmid(\word)$, where $\wordsubwordmapmid: \lexicon \to \vocabmid^*$. 
Upon observing subsequence $\subwordschosenword_{\midstring}$, we cannot instantaneously map it back to $\word$, and must wait for the next symbol:
if $\subwordschosenword_{\midstring}$ is followed by either \eos or punctuation, then it is mapped back to $\word$; 
if not, it is mapped to another word.
Handling this thus requires the following fix (termed \newbugfixcounter{bugfix:eowateos} here):
\resizebox{\columnwidth}{!}{
    \begin{minipage}{\linewidth}
\vspace{-10pt}
\begin{align} \label{eq:prob_eoweos_tokeniser}
    &p(\word \mid \words_{<t}) = \\
    &
    \bigg(\!
    p\left(\subwordschosenword_{\midstring} \!\mathop{\mid} \subwordschosenprefix\right) \!\! \underbrace{\sum_{\subword \in \vocabpuncteos} \!\! p(\subword\!\mathop{\mid} \subwordschosenprefix\!\mathop{\circ} \subwordschosenword_{\midstring})
    \!\bigg) \!+\! 
    p(\subwordschosenword\!\mathop{\mid} \subwordschosenprefix)
    }_{\text{\usebugfixcounter{bugfix:eowateos}}} \nonumber
\end{align}
\end{minipage}}

\begin{figure*}[t]
    \centering
    \includegraphics[width=\textwidth]{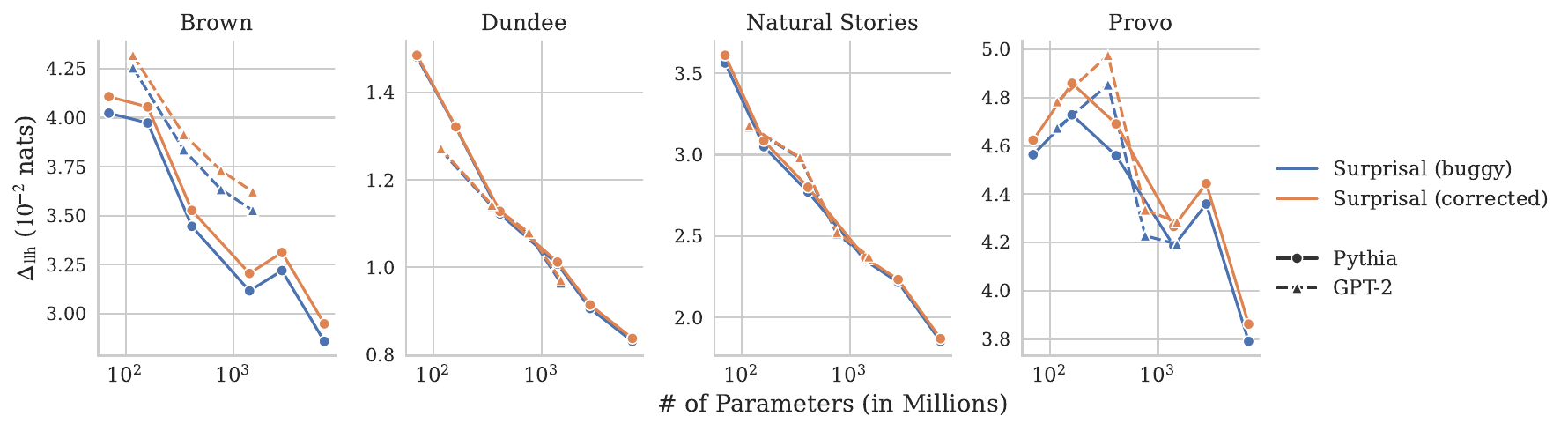}
    \vspace{-18pt}
    \caption{$\deltallh$ between regressors with and without surprisal as a predictor. We include $\deltallh$ when using surprisal estimates computed from language models across several sizes and families.  Results are presented both when using the buggy and correct methods for surprisal estimation.}\label{fig:llh}
    \vspace{-5pt}
\end{figure*}

\vspace{-3pt}
\paragraph{Non-\bow-marked First Words.}
Just as \eow-marking tokenisers often treat final words differently, \bow-marking tokenisers treat the first word in a sequence differently to handle white space appropriately.
These tokenisers typically do \emph{not} mark first words with \bow, and instead apply  $\wordsubwordmapmid$ to  $\word_1$. 
This affects the probability computation of the first word in a sequence. 
In such cases, the prefix $\words_{<t}$ of the first word is empty (denoted here as $\emptystring$). 
While computing a word's contextual probability according to \cref{eq:bow_prob_equivalence} requires computing $\pprefixsubwords(\vocabboweospluscont)$, the first subword in a sequence will not be in $\vocabbow$, but in $\vocabmid$ instead.
The probability computation of such words thus requires the following correction (\newbugfixcounter{bugfix:bowatbos}):
\vspace{-12pt}
\begin{align} \label{eq:prob_bowatbos_tokeniser}
    &p(\word \mid \emptystring) = 
    p\left(\subwordschosenword_{\midstring} \mid \emptystring \right)\,
    \underbrace{\frac{\sum_{\{\subword \in \vocabboweos \}} p\left(\subword \mid \subwordschosenword\right)}
    {\sum_{\{\subword \in \vocabmideos\}} p\left(\subword \mid \emptystring \right)}}_{\text{\usebugfixcounter{bugfix:bowatbos}}}
\end{align}

\subsection{Defining \texorpdfstring{$\vocabbow$ and $\vocabeow$}{\bow and \eow Vocabularies}}
\label{sec:vocab_definitions}

Defining sets $\vocabbow$ or $\vocabeow$ for a given tokeniser is not necessarily straightforward,
as tokenisers do not explicitly mark $\bow$ or $\eow$ in their vocabularies.%
\footnote{They often mark white spaces instead (denoted here as $\textchar{\_}$), but white space need not be the only word-boundary marker.}
Further, these sets' definitions will depend on what a researcher considers a word to be.
As an example, we use the sentence in \cref{fig:tokenisers_output}: \textchar{How\_do\_you\_compute\_a\_word's\_probability?}.
One could define words to be the set of whitespace-separated character sequences: 
$\textwords{How} \circ \textwords{do} \circ \textwords{you} \circ \textwords{compute} \circ \textwords{a} \circ \textwords{word's} \circ \textwords{probability?}$.
However, one may also consider punctuation and clitics to impose word boundaries, meaning words would instead be delineated as:
$\textwords{How} \circ \textwords{do} \circ \textwords{you} \circ \textwords{compute} \circ \textwords{a} \circ \textwords{word} \circ \textwords{'s} \circ \textwords{probability} \circ \textwords{?}$. 
In the former case, we would define $\vocabbow$ and $\vocabeow$ simply as the set of subwords with a leading or trailing white space (e.g., $\texttokenised{\_word} \in \vocabbow$ or $\texttokenised{'s\_} \in \vocabeow$). 
In the latter case though, subwords starting with punctuation or clitics should also be included in $\vocabbow$ (e.g., we require $\texttokenised{'s} \in \vocabbow$). 
This choice further impacts probability computations:
computing $\eow$-marking probabilities in the former case simply requires \cref{eq:prob_eow_tokeniser}, while in the latter case it requires \usebugfixcounter{bugfix:eowateos}.\footnote{
A recent work \citep{giulianelli-etal-2024-proper} proposes a method allowing the computation of the probability of any character span within a sequence. 
E.g., one can compute the probability of \textchar{word} or \textchar{ord's\_prob} in the example above.
While computing the probability of arbitrary character spans can be valuable, we note that there is no single sequence of characters that is equivalent to a word.
For example, $p(\textchar{compute\_} \!\mid\! \characters_{<t})$  is the probability of $\textchar{compute}$ followed by $\textchar{\_}$; the methods here, however, compute the probability of a word, $p(\textwords{compute} \!\mid\! \words_{<t})$, regardless of what follows it.
We can combine our considerations and their method to recover the probability of a word by first defining a set of word-ending characters $\characterspacebow$, and then using it to marginalise a word's probability over possible word-ending continuations:
$p(\textwords{compute} \!\mid\! \words_{<t}) \! = \!\sum_{\character \in \characterspaceboweos}\! p(\textchar{compute} \circ \character \!\mid\! \characters_{<t})$.
We thus see our works as complementary.\looseness=-1
}\looseness=-1
\section{Experiments}

We now investigate how correcting the computation of word probability estimates affects the results of prior studies. 
These works incorrectly computed probabilities as $p\left(\subwordschosenword \mathop{\mid} \subwordschosenprefix\right)$ (i.e., using \cref{eq:bow_prob_equivalence} without \usebugfixcounter{bugfix:bow}), which we term as \defn{buggy} estimates here. 
We explore two settings: psycholinguistics experiments surrounding sentence comprehension \citep{hale2001probabilistic,levy2008expectation} and computational linguistics experiments assessing the lexicon's communicative efficiency \citep{piantadosi2011word,gibson2019efficiency}.
We follow these works' experimental methodologies, observing how the use of corrected surprisal estimates impacts the conclusions that were originally drawn. 

\paragraph{Models.}
In our first experiment, we estimate contextual probabilities using GPT-2 \cite{gpt2} and Pythia \cite{biderman-etal-2023-pythia}; in the second, we focus only on Pythia.
Both these suites contain language models of various sizes.
We use these models' open-source versions from the \texttt{transformers} library \citep{wolf-etal-2020-transformers}. 
GPT-2 and Pythia use \bow-marking tokenisers, meaning we employ the methods discussed in \cref{sec:bow} to compute words' contextual probabilities.

\subsection{Sentence Comprehension} \label{sec:surprisal_theory}

\newcommand{\surp}{h}

Surprisal theory \citep{hale2001probabilistic,levy2008expectation} hypothesises that readers keep a belief distribution over meanings while reading; after observing each word in a sentence, they must thus update this distribution.
Under some assumptions about how these belief updates are performed, surprisal theory then predicts that their cost is related to a word's \defn{surprisal}, defined as the negative log-probability:
\begin{align}\label{eq:surprisal}
    \surp(\word_t) \defeq - \log p(\word_t \mid \words_{<t})
\end{align}
Surprisal theory is widely accepted as a model of comprehension effort, with numerous works empirically supporting it \citep[\textit{inter alia}]{smith2008optimal,smith2013-log-reading-time,goodkind-bicknell-2018-predictive,shain-2019-large,wilcox2020predictive,wilcox-etal-2023-language,oh2022comparison,shain2024large}.
Notably, the true contextual probabilities $p(\word_t \mid \words_{<t})$ required to compute surprisal are unknown, and must be approximated.
All of the works above use language models to do so, with the most recent using LMs which operate on top of subwords produced by $\bow$-marking tokenisers \citep[e.g.,][]{oh2023why,oh-schuler-2023-transformer,shain2024large,pimentel2023effect}.
Notably, these works compute surprisal estimates using the aforementioned buggy estimates of $p(\word_t \mid \words_{<t})$.
In this section, we reproduce some of these prior works' results, observing how this correction affects results.

\paragraph{Setup Summary.}
We run our analyses on 4 reading times datasets---Brown, Dundee, Natural Stories, and Provo. 
Further, following prior work \citep{wilcox-etal-2023-language,oh2023why}, we evaluate surprisal's predictive power over reading times by measuring the change in data log-likelihood $\deltallh$ when using linear regressors with and without surprisal as a predictor.
More details about our experimental setup are in \cref{app:setup_sentence_comprehension}.

\paragraph{Results.}
\cref{fig:llh} shows the change in data log-likelihood under regressors with and without surprisal as a predictor; values are detailed in \cref{tab:surp_theory_full_table} (in the appendix). 
We first note that the predictive power of surprisal decreases as language model size increases, as observed in prior work \cite{oh2023why,shain2024large}.
Here however, we are more interested in the effect of our corrections on these results---labelled as buggy vs. corrected surprisal.
Interestingly, we observe only small changes in predictive power due to our correction; individually, these changes are only significant for a few models (see \cref{tab:surp_theory_full_table} for detailed results).
However, when analysed in aggregate for all models, we see this positive improvement is consistent and significant in the four dataset ($\alpha<0.01$ in our permutation tests).
We also confirm the same patterns in seven other languages in \cref{app:multilingual_surprisal}.

\subsection{Communicative Efficiency}

Languages' lexicons have been studied for decades in an effort to gain better insights about the forces that shape natural languages \citep{zipf1935psychobiology,howes1968zipf,bentz2016zipf,levshina2022frequency}.
One characteristic of particular interest has been word lengths and how a tendency for communicative efficiency has influenced them.
There are several hypotheses about the exact way in which this tendency takes effect.
\citet{zipf1935psychobiology} argues that speakers have a tendency towards minimising utterance lengths,
and therefore that word lengths should correlate with frequencies.
\citet{piantadosi2011word} argues that speakers maximise information transfer,
and thus word lengths should correlate with a word's expected surprisal instead:
\begin{align}\label{eq:cch_piantadosi}
    \expectsurp \defeq \expect_{\words_{< t}}\left[- \log p(\word_t \mid \words_{<t}) \mid \word_t \right]
\end{align}
We follow \citet{pimentel-etal-2023-revisiting} in calling this the \defn{channel capacity hypothesis} (CCH). 
Finally, \citet{pimentel-etal-2023-revisiting} point out an issue with \citeauthor{piantadosi2011word}'s solution, and argue that to maximise information transfer, lengths should correlate with the following value instead:\footnote{See their paper for a derivation for this fix.}
\begin{align}\label{eq:cch_pimentel}
    &\expectsurpsquared \!\defeq\! 
    \frac{\expect_{\words_{< t}}\!\left[(- \log p(\word_t \!\mid \words_{<t}))^2 \!\mid \word_t \right]}
    {\expect_{\words_{< t}}\!\left[- \log p(\word_t \!\mid \words_{<t}) \!\mid \word_t \right]}
\end{align}

\paragraph{Setup Summary.}
We run our analysis using a subset of the English portion of Wiki-40B \citep{Guo_Dai_Vrandečić_Al-Rfou_2020}.
We compare the three values above (unigram frequency, and \cref{eq:cch_piantadosi,eq:cch_pimentel}); evaluating them based on their correlation with words' lengths.
Two of these values depend on a word's contextual probability, and we thus also compare their fixed vs.\ buggy versions.

\paragraph{Results.}
The results in \cref{fig:word_lengths} confirm the findings of \citet{pimentel-etal-2023-revisiting}:
once larger (and better) language models are used to estimate words' surprisals, the metrics under the CCH hypothesis (both \citeauthor{piantadosi2011word}'s and \citeauthor{pimentel-etal-2023-revisiting}'s versions) become weaker predictors of word lengths.
Interestingly, correcting the computation of surprisals also leads to a drop in the correlations between CCH predictors and word lengths.
Improving CCH's predictors thus consistently hurts its predictive power over word lengths---either when using better models, \citeauthor{pimentel-etal-2023-revisiting}'s fix to CCH's optimal solution, or our fix to probability computations.
We conclude, as \citeauthor{pimentel-etal-2023-revisiting}, that word lengths are best predicted by Zipf's hypothesis.

\begin{figure}
    \centering
    \includegraphics[width=.95\linewidth]{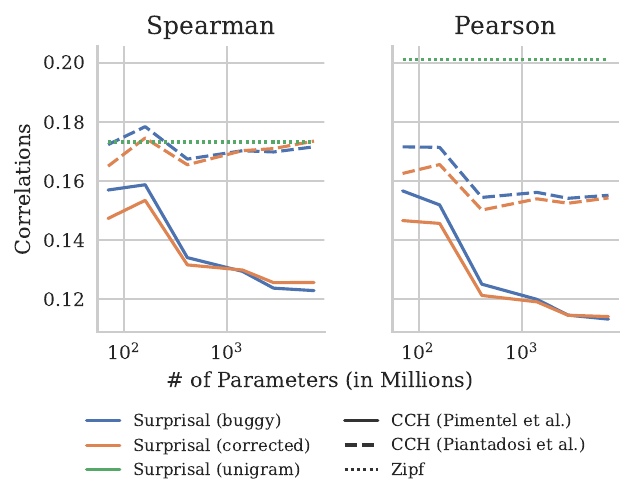}
    \vspace{-10pt}
    \caption{Correlation between English word lengths and the values predicted by either a Zipfian notion of optimality, or the channel capacity hypothesis.
    CCH (\citeauthor{pimentel-etal-2023-revisiting}) and CCH (\citeauthor{piantadosi2011word}) refer to \cref{eq:cch_piantadosi,eq:cch_pimentel}.}\label{fig:word_lengths}
    \vspace{-12pt}
\end{figure}

\section{Conclusion}
This work expounds on the intricacies of accurately computing contextual word probabilities using language models. 
We focus on the challenges posed by the use of subword vocabularies.
We show that subword vocabularies defined using beginning-of-word (\bow) tokenisers---common in many modern LMs---introduce complexities that are often overlooked. 
We point out that this has led to potential inaccuracies in computing probability estimates of various prior empirical analyses.
Our methodological corrections lead to significant differences in results, although the overarching conclusions of the previous studies that we explore remain the same. 
This finding underscores the importance of precise computational methods in linguistic research.
Future work should ensure these corrections are adopted to enhance the reliability of their analyses.

\section*{Limitations}
The authors see limitations in both the theoretical and empirical aspects of this work. 
Perhaps the main theoretical limitation is the lack of consideration of all potential corner cases which tokenisers might implement (similar to, e.g., those discussed in \cref{sec:corners}).
The use of white space differs from language to language, and many corner cases of tokeniser behaviour are designed specifically to handle this.
There are likely other fixes to probability computations that would need to be derived to handle paradigms not discussed in \cref{sec:corners}.
In Spanish, for instance, words following  ``¿'' are usually not \bow-marked, and might thus require the use of an approach similar to \usebugfixcounter{bugfix:bowatbos}.
Our theoretical results are also limited to autoregressive models. 
While the majority of today's language models meet this criterion, it is feasible that future language models would be designed differently and consequently, our methods would no longer be necessarily applicable. 
On the empirical side, a large limitation of our work is the exploration of the impact of our methods in only two studies. 
Additional studies are thus needed to understand the full extent to which our corrections impact empirical results in other areas of computational linguistics (and of NLP, more broadly).

\section*{Acknowledgments}

We thank Ethan Wilcox for many discussions about this paper, and for helping to draft parts of it.
We also thank Sotiris Anagnostidis and Pietro Lesci for feedback on earlier versions of this manuscript, and Yahya Emara and Mario Giulianelli for feedback on the final version. 
\bibliography{custom}

\onecolumn
\appendix

\section{Experimental Setup}

\subsection{Sentence Comprehension} \label{app:setup_sentence_comprehension}

\paragraph{Data.}
We use four well-established reading time datasets, in which 
participants were given text passages to read and their reading time was recorded.
For two of these datasets---Natural Stories \citep{futrell-etal-2018-natural} and Brown \citep{smith2013-log-reading-time}---measurements were collected using the self-paced paradigm \citep{Just1982ParadigmsAP}.
For the other two datasets---Provo \citep{provo} and Dundee \citep{dundee}---eye-tracking movements were recorded.
Each of these datasets provides the reading time each participant spent on a word.
Following the works whose experiments we aim to replicate, we aggregate reading times per word (i.e., across participants).
We thus analyse the average reading time participants spent on a word.

\paragraph{Evaluation.}
Studies of sentence comprehension are often concerned with a variable's \defn{predictive power}: its ability to predict sentence comprehension data.
Formally, let $\dataset = \{\vx_n, y_n\}_{n=1}^N$ be a reading times dataset, 
where
$y_n \in \R_{+}$ represents the average time participants spent reading a word $\word_n$, and
$\vx_n \in \R^{d}$ is a vector containing a number of measurements taken on that word.
Among these quantities is a word's length (in characters) and unigram frequency.
Further, let $\regressor$ be a regressor that takes $\vx_n$ as input and predicts $y_n$.
We use $\psi$ to denote this regressor's parameters.
A variable's predictive power is then the change in $\dataset$'s log-likelihood (denoted as $\deltallh$) under two regressors: one where $\vx$ includes this variable ($\regressorone$), and one where it does not ($\regressortwo$):
\begin{align}
    \deltallh \defeq \mathrm{llh}(\regressorone, \dataset) - \mathrm{llh}(\regressortwo, \dataset)
\end{align}
Here, we use this equation to measure surprisal's predictive power.
Further, we estimate this change in data log-likelihood (denoted as $\deltallh$) using 10-fold cross-validation, and we leverage these results to run paired permutation tests.
Finally, we account for spillover effects by including features of word $\word_n$ as well as its three preceding words in $\vx$.

\subsection{Communicative Efficiency} \label{app:setup_comm_efficiency}

We largely follow the setup of \citet{pimentel-etal-2023-revisiting}.
We highlight the points where our setups differ below. 

\paragraph{Data.} 
We use the publicly available Wiki40b dataset \cite{Guo_Dai_Vrandečić_Al-Rfou_2020}, a large text corpus derived from Wikipedia articles. 
We use only the English portion of this dataset because the language models that we consider were trained solely on English data. We randomly sample a subset of the data, of size $\approx 20$M tokens. 
We do not perform any pre-processing of the text, beyond that carried out by the native HuggingFace tokenisers for the respective language models. 
Unigram frequencies---which are used to estimate the unigram surprisals required by the Zipfian hypothesis---are computed on a separate subset of this same dataset.

\paragraph{Evaluation.}
We look at correlations between word lengths and the quantities put forward by various hypotheses about the influencing factors in a lexicon's word lengths. 
We expect to see that the hypotheses offering more accurate accounts of such factors have higher correlations with word lengths. In line with prior work \citep{piantadosi2011word,meylan2021challenges,levshina2022frequency,pimentel-etal-2023-revisiting}, we look at Spearman and Pearson correlations.

\clearpage
\section{Detailed Surprisal Theory Results}

\begin{table*}[h]
    \centering
    \resizebox{\textwidth}{!}{%
    \begin{tabular}{lccccccccccccc}
         \toprule
         &\multicolumn{3}{c}{Brown}
         &\multicolumn{3}{c}{Natural Stories}
         &\multicolumn{3}{c}{Provo}
         &\multicolumn{3}{c}{Dundee} \\
         \cmidrule(lr){2-4}
         \cmidrule(lr){5-7}
         \cmidrule(lr){8-10}
         \cmidrule(lr){11-13}
         Model
         & Improvement & Corrected & Buggy
         & Improvement & Corrected & Buggy
         & Improvement & Corrected & Buggy
         & Improvement & Corrected & Buggy \\
         \midrule
gpt2-small
& \phantom{-}{0.02}\phantom{$^{***}$}& \phantom{-}\textcolor{mygreen}{5.25}$^{***}$& \phantom{-}\textcolor{mygreen}{5.24}$^{***}$& \phantom{-}{0.02}\phantom{$^{***}$}& \phantom{-}\textcolor{mygreen}{4.35}$^{***}$& \phantom{-}\textcolor{mygreen}{4.33}$^{***}$& \phantom{-}{0.16}\phantom{$^{***}$}& \phantom{-}\textcolor{mygreen}{3.63}$^{***}$& \phantom{-}\textcolor{mygreen}{3.47}$^{***}$& \phantom{-}{0.01}\phantom{$^{***}$}& \phantom{-}\textcolor{mygreen}{1.07}$^{***}$& \phantom{-}\textcolor{mygreen}{1.06}$^{***}$ \\
gpt2-medium
& \phantom{-}{0.03}\phantom{$^{***}$}& \phantom{-}\textcolor{mygreen}{4.51}$^{***}$& \phantom{-}\textcolor{mygreen}{4.48}$^{***}$& \phantom{-}{0.02}\phantom{$^{***}$}& \phantom{-}\textcolor{mygreen}{4.08}$^{***}$& \phantom{-}\textcolor{mygreen}{4.07}$^{***}$& \phantom{-}{0.20}\phantom{$^{***}$}& \phantom{-}\textcolor{mygreen}{3.47}$^{***}$& \phantom{-}\textcolor{mygreen}{3.27}$^{***}$& \phantom{-}{0.01}\phantom{$^{***}$}& \phantom{-}\textcolor{mygreen}{1.01}$^{***}$& \phantom{-}\textcolor{mygreen}{1.00}$^{***}$ \\
gpt2-large
& \phantom{-}{0.04}\phantom{$^{***}$}& \phantom{-}\textcolor{mygreen}{4.53}$^{***}$& \phantom{-}\textcolor{mygreen}{4.49}$^{***}$& \phantom{-}{0.02}\phantom{$^{***}$}& \phantom{-}\textcolor{mygreen}{3.68}$^{***}$& \phantom{-}\textcolor{mygreen}{3.65}$^{***}$& \phantom{-}{0.21}\phantom{$^{***}$}& \phantom{-}\textcolor{mygreen}{3.10}$^{***}$& \phantom{-}\textcolor{mygreen}{2.89}$^{***}$& \phantom{-}{0.01}\phantom{$^{***}$}& \phantom{-}\textcolor{mygreen}{0.98}$^{***}$& \phantom{-}\textcolor{mygreen}{0.97}$^{***}$ \\
gpt2-xl
& \phantom{-}{0.03}\phantom{$^{***}$}& \phantom{-}\textcolor{mygreen}{4.23}$^{***}$& \phantom{-}\textcolor{mygreen}{4.20}$^{***}$& \phantom{-}{0.03}\phantom{$^{***}$}& \phantom{-}\textcolor{mygreen}{3.28}$^{***}$& \phantom{-}\textcolor{mygreen}{3.25}$^{***}$& \phantom{-}{0.19}\phantom{$^{***}$}& \phantom{-}\textcolor{mygreen}{3.09}$^{**}$\phantom{$^{*}$}& \phantom{-}\textcolor{mygreen}{2.90}$^{**}$\phantom{$^{*}$}& \phantom{-}{0.01}\phantom{$^{***}$}& \phantom{-}\textcolor{mygreen}{0.89}$^{***}$& \phantom{-}\textcolor{mygreen}{0.88}$^{***}$ \\
\midrule
pythia-70m
& \phantom{-}{0.01}\phantom{$^{***}$}& \phantom{-}\textcolor{mygreen}{4.70}$^{***}$& \phantom{-}\textcolor{mygreen}{4.69}$^{***}$& \phantom{-}{0.07}\phantom{$^{***}$}& \phantom{-}\textcolor{mygreen}{4.86}$^{***}$& \phantom{-}\textcolor{mygreen}{4.79}$^{***}$& \phantom{-}{0.10}\phantom{$^{***}$}& \phantom{-}\textcolor{mygreen}{3.69}$^{***}$& \phantom{-}\textcolor{mygreen}{3.59}$^{***}$& \phantom{-}{0.01}\phantom{$^{***}$}& \phantom{-}\textcolor{mygreen}{1.19}$^{***}$& \phantom{-}\textcolor{mygreen}{1.18}$^{***}$ \\
pythia-160m
& \phantom{-}{0.02}\phantom{$^{***}$}& \phantom{-}\textcolor{mygreen}{4.81}$^{***}$& \phantom{-}\textcolor{mygreen}{4.78}$^{***}$& \phantom{-}\textcolor{mygreen}{0.05}$^{*}$\phantom{$^{**}$}& \phantom{-}\textcolor{mygreen}{4.27}$^{***}$& \phantom{-}\textcolor{mygreen}{4.22}$^{***}$& \phantom{-}{0.15}\phantom{$^{***}$}& \phantom{-}\textcolor{mygreen}{3.61}$^{***}$& \phantom{-}\textcolor{mygreen}{3.46}$^{***}$& \phantom{-}{0.01}\phantom{$^{***}$}& \phantom{-}\textcolor{mygreen}{1.14}$^{***}$& \phantom{-}\textcolor{mygreen}{1.13}$^{***}$ \\
pythia-410m
& \phantom{-}{0.03}\phantom{$^{***}$}& \phantom{-}\textcolor{mygreen}{4.34}$^{***}$& \phantom{-}\textcolor{mygreen}{4.31}$^{***}$& \phantom{-}\textcolor{mygreen}{0.05}$^{**}$\phantom{$^{*}$}& \phantom{-}\textcolor{mygreen}{3.86}$^{***}$& \phantom{-}\textcolor{mygreen}{3.81}$^{***}$& \phantom{-}{0.20}\phantom{$^{***}$}& \phantom{-}\textcolor{mygreen}{3.24}$^{***}$& \phantom{-}\textcolor{mygreen}{3.04}$^{***}$& \phantom{-}{0.01}\phantom{$^{***}$}& \phantom{-}\textcolor{mygreen}{1.05}$^{***}$& \phantom{-}\textcolor{mygreen}{1.04}$^{***}$ \\
pythia-14b
& \phantom{-}{0.03}\phantom{$^{***}$}& \phantom{-}\textcolor{mygreen}{4.01}$^{***}$& \phantom{-}\textcolor{mygreen}{3.98}$^{***}$& \phantom{-}\textcolor{mygreen}{0.04}$^{*}$\phantom{$^{**}$}& \phantom{-}\textcolor{mygreen}{3.24}$^{***}$& \phantom{-}\textcolor{mygreen}{3.20}$^{***}$& \phantom{-}{0.16}\phantom{$^{***}$}& \phantom{-}\textcolor{mygreen}{2.80}$^{***}$& \phantom{-}\textcolor{mygreen}{2.64}$^{***}$& \phantom{-}{0.01}\phantom{$^{***}$}& \phantom{-}\textcolor{mygreen}{0.96}$^{***}$& \phantom{-}\textcolor{mygreen}{0.95}$^{***}$ \\
pythia-28b
& \phantom{-}{0.04}\phantom{$^{***}$}& \phantom{-}\textcolor{mygreen}{3.92}$^{***}$& \phantom{-}\textcolor{mygreen}{3.89}$^{***}$& \phantom{-}{0.03}\phantom{$^{***}$}& \phantom{-}\textcolor{mygreen}{2.96}$^{***}$& \phantom{-}\textcolor{mygreen}{2.94}$^{***}$& \phantom{-}{0.18}\phantom{$^{***}$}& \phantom{-}\textcolor{mygreen}{3.09}$^{***}$& \phantom{-}\textcolor{mygreen}{2.91}$^{***}$& \phantom{-}{0.01}\phantom{$^{***}$}& \phantom{-}\textcolor{mygreen}{0.88}$^{***}$& \phantom{-}\textcolor{mygreen}{0.87}$^{***}$ \\
pythia-69b
& \phantom{-}{0.04}\phantom{$^{***}$}& \phantom{-}\textcolor{mygreen}{3.59}$^{***}$& \phantom{-}\textcolor{mygreen}{3.55}$^{***}$& \phantom{-}{0.03}\phantom{$^{***}$}& \phantom{-}\textcolor{mygreen}{2.55}$^{***}$& \phantom{-}\textcolor{mygreen}{2.52}$^{***}$& \phantom{-}{0.16}\phantom{$^{***}$}& \phantom{-}\textcolor{mygreen}{2.61}$^{***}$& \phantom{-}\textcolor{mygreen}{2.46}$^{***}$& \phantom{-}{0.01}\phantom{$^{***}$}& \phantom{-}\textcolor{mygreen}{0.81}$^{***}$& \phantom{-}\textcolor{mygreen}{0.80}$^{***}$ \\
pythia-120b
& \phantom{-}{0.04}\phantom{$^{***}$}& \phantom{-}\textcolor{mygreen}{3.51}$^{***}$& \phantom{-}\textcolor{mygreen}{3.46}$^{***}$& \phantom{-}{0.03}\phantom{$^{***}$}& \phantom{-}\textcolor{mygreen}{2.47}$^{***}$& \phantom{-}\textcolor{mygreen}{2.45}$^{***}$& \phantom{-}{0.17}\phantom{$^{***}$}& \phantom{-}\textcolor{mygreen}{2.36}$^{***}$& \phantom{-}\textcolor{mygreen}{2.19}$^{***}$& \phantom{-}{0.01}\phantom{$^{***}$}& \phantom{-}\textcolor{mygreen}{0.76}$^{***}$& \phantom{-}\textcolor{mygreen}{0.75}$^{***}$ \\
        \bottomrule
    \end{tabular}%
    }
    \caption{$\deltallh$ between regressors with and without surprisal as a predictor.}
    \label{tab:surp_theory_full_table}
\end{table*}

\section{Multilingual Surprisal Theory Results} \label{app:multilingual_surprisal}

In this section, we expand our surprisal theory experiments (in \cref{sec:surprisal_theory}) to multiple languages, following a similar experimental setup to \citet{wilcox2023testing}.
Specifically, we analyse the MECO dataset \citep{siegelman2022expanding}, running our experiments on seven of its languages: Finnish, German, Greek, Hebrew, Italian, Spanish, and Turkish.
We estimate surprisals for the words in these languages using mGPT \citep{shliazhko2022mgpt}---a language model defined over the output of a \bow-marking tokeniser.
We thus analyse the effect of our correction \usebugfixcounter{bugfix:bow} when using this model to estimate surprisals.
The $\deltallh$ when predicting reading times on this dataset are presented in \cref{fig:multilingual_llh}.\looseness=-1

\begin{figure}[h]
    \centering
    \includegraphics[width=.8\linewidth]{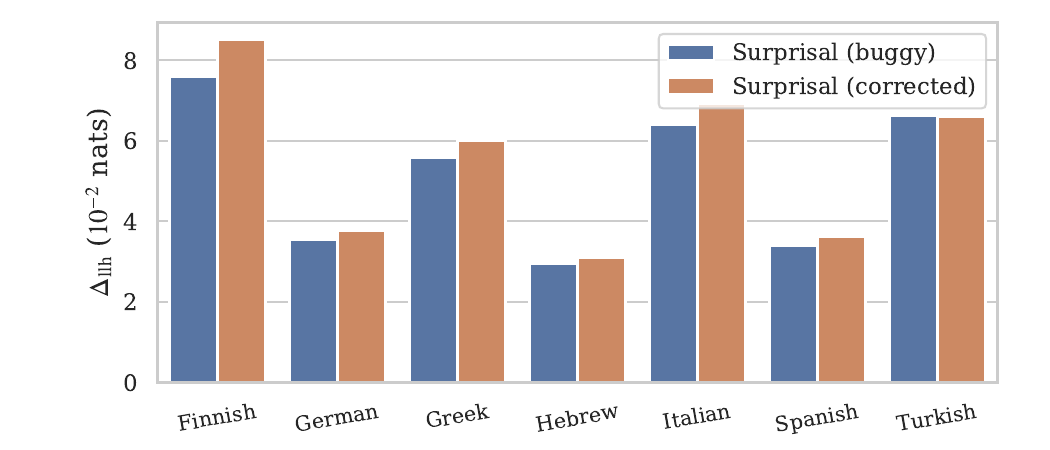}
    \caption{$\deltallh$ between regressors with and without surprisal as a predictor in a subset of the languages in the MECO dataset \citep{siegelman2022expanding}.}
    \label{fig:multilingual_llh}
\end{figure}

\section{Proofs of Lemmas and Theorems}

\subsection{Proof of End-of-Word Tokeniser's \texorpdfstring{\Cref{thm:probofeowtokeniser}}{Theorem}}
\label{app:proof_theorem_probofeowtokeniser}

\begin{restatable}{lemma}{setequivofeowtokeniser} \label{lemma:prefix_prob_equivs_eow_tokeniser}
    Let $\wordsubwordsmap$ be a \eow-marking tokeniser.
    We can show the following equivalence:
    \begin{align}
        &\pprefixwords(\words_{<t} \circ \lexicon^*) = \pprefixsubwords(\subwordschosenprefix \circ \vocab^*) \\
        &\pprefixwords(\words_{<t} \circ \word \circ \lexicon^*) = \pprefixsubwords(\subwordschosenprefix \circ \subwordschosenword \circ \vocab^*) \nonumber
    \end{align}
\end{restatable}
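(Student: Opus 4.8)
The plan is to reduce both claimed prefix-probability identities to the set-level equivalences $\words_{<t} \circ \lexicon^* \stringequiv \subwordschosenprefix \circ \vocab^*$ and $\words_{<t} \circ \word \circ \lexicon^* \stringequiv \subwordschosenprefix \circ \subwordschosenword \circ \vocab^*$; once these are established, the two prefix probabilities coincide immediately by \cref{eq:seq_equivalences}. Since segmentation-awareness (\cref{defn:pretokenised_tokeniser}) gives $\subwordschosenprefix \circ \subwordschosenword = \wordsubwordsmap(\words_{<t} \circ \word)$, the second equivalence is just the first one applied to the length-$t$ prefix $\words_{<t} \circ \word$, so I would prove only the first and then obtain the second by relabelling $t \mapsto t+1$.

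For the first equivalence I would unfold the definition in \cref{eq:set_equivalence_defn}: it suffices to show, for every $\words' \in \lexicon^*$, that $\words' \in \words_{<t} \circ \lexicon^* \iff \wordsubwordsmap(\words') \in \subwordschosenprefix \circ \vocab^*$. The forward direction is routine: if $\words' = \words_{<t} \circ \words''$, segmentation-awareness yields $\wordsubwordsmap(\words') = \subwordschosenprefix \circ \wordsubwordsmap(\words'')$, and since $\wordsubwordsmap(\words'') \in \vocab^*$, the image lies in $\subwordschosenprefix \circ \vocab^*$.

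The backward direction is the crux, and it is exactly where instantaneous decodability enters. The key structural fact from \cref{defn:eow_tokeniser} is that each word maps into $\vocabmid^* \circ \vocabeow$, so in any image $\wordsubwordsmap(\words')$ the subwords belonging to $\vocabeow$ sit precisely at word boundaries, one per word and never word-internally. I would then observe that $\subwordschosenprefix$ ends in a $\vocabeow$ subword and contains exactly $t-1$ such markers, one for each of $\word_1,\dots,\word_{t-1}$. If $\subwordschosenprefix$ is a prefix of $\wordsubwordsmap(\words')$, these $t-1$ markers must coincide with the first $t-1$ word boundaries of $\words'$, forcing the chunks $\wordsubwordmap(\word_1'),\dots,\wordsubwordmap(\word_{t-1}')$ to agree with those of $\words_{<t}$. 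Injectivity of $\wordsubwordmap$ (unique decodability) then gives $\word_i' = \word_i$ for all $i < t$, hence $\words' \in \words_{<t} \circ \lexicon^*$.

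I expect the main obstacle to be making the boundary-alignment step airtight: precisely, arguing that a prefix ending in $\vocabeow$ cannot cut through a word-chunk, so that the count of \eow markers pins down the number of completed words and their exact spans. The cleanest route is to formalise that the positions of $\vocabeow$ subwords form a strictly increasing sequence of word-end markers, so that agreeing on a prefix terminating at such a position forces agreement on every preceding word boundary; note also that invalid continuations in $\subwordschosenprefix \circ \vocab^*$ never need special treatment here, since the set equivalence only quantifies over genuine images $\wordsubwordsmap(\words')$.
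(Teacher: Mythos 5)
Your proposal is correct and takes essentially the same route as the paper's own proof: the forward inclusion via the segmentation-aware decomposition of $\wordsubwordsmap$, and a backward direction that rests on the fact that $\subwordschosenprefix$ ends in a subword from $\vocabeow$ (instantaneous decodability), with unmapped subword sequences contributing zero probability under exactness. The only packaging differences are that you establish the biconditional set equivalence and then cite \cref{eq:seq_equivalences} (obtaining the second identity by relabelling $t \mapsto t+1$), whereas the paper inlines the probability-splitting argument and phrases the backward step as a decomposition of the detokenisation map at the $\vocabeow$ boundary rather than via your explicit marker-counting alignment.
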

\begin{proof}
    This lemma assumes a segmentation-compatible tokeniser.
    Therefore, we can rely on \cref{defn:pretokenised_tokeniser}, whose equation we rewrite here for convenience:
    \begin{align}
        \wordsubwordsmapmath(\words) = \wordsubwordmapmath(\word_1) \circ \wordsubwordmapmath(\word_2) \circ \cdots \circ  \wordsubwordmapmath(\word_{|\words|})
    \end{align}
    Further, as this tokeniser is \eow-marking, we have that: $\wordsubwordmap: \lexicon \to \vocabmid^{*} \circ \vocabeow$.
    We now prove the equivalences above.
    First, we show that $\words' \in (\words_{<t} \circ \lexicon^*) \implies \wordsubwordsmap(\words') \in (\subwordschosenprefix \circ \vocab^*)$; 
    this shows that the tokenised version of all strings $\words' \in (\words_{<t} \circ \lexicon^*)$ are present in the set $(\subwordschosenprefix \circ \vocab^*)$.
    \begin{subequations}
    \begin{align}
        \words_{<t} \circ \lexicon^* 
        &= \left\{\words_{<t} \circ \words' \mid \words' \in \lexicon^* \right\} & \mathcomment{definition of $\circ$} \\
        &\stringequiv \left\{\wordsubwordsmapmath(\words_{<t} \circ \words') \mid \words' \in \lexicon^* \right\} & \mathcomment{definition of $\stringequiv$} \\
        &= \left\{\wordsubwordsmapmath(\words_{<t}) \circ \wordsubwordsmapmath(\words') \mid \words' \in \lexicon^* \right\} & \mathcomment{decomposition of $\wordsubwordsmapmath$} \\
        &= \wordsubwordsmapmath(\words_{<t}) \circ \left\{\wordsubwordsmapmath(\words') \mid \words' \in \lexicon^* \right\} & \mathcomment{definition of $\circ$ over sets} \\
        &= \subwordschosenprefix \circ \left\{\wordsubwordsmapmath(\words') \mid \words' \in \lexicon^* \right\}  & \mathcomment{definition of $\subwordschosenprefix$} \\
        &\subseteq \subwordschosenprefix \circ \vocab^*
    \end{align}
    \end{subequations}
    \newcommand{\wordsprefixset}{\subwordsset^{\words_{<t} \circ \lexicon^*}}
    We now define the set $\wordsprefixset \defeq \left\{\wordsubwordsmap(\words') \mid \words' \in (\words_{<t} \circ \lexicon^*) \right\}$, and note that $\words_{<t} \circ \lexicon^*  \stringequiv \wordsprefixset$. 
    We can thus split the probability we are computing into two parts:
    \begin{align}
        \pprefixsubwords(\subwordschosenprefix \circ \vocab^*)
        &= \pprefixsubwords(\wordsprefixset) + 
        \pprefixsubwords((\subwordschosenprefix \circ \vocab^*) \setminus \wordsprefixset) 
    \end{align}
    If we prove that $\pprefixsubwords((\subwordschosenprefix \circ \vocab^*) \setminus \wordsprefixset) = 0$, then it must be that $\pprefixwords(\words_{<t} \circ \lexicon^*) = \pprefixsubwords(\subwordschosenprefix \circ \vocab^*)$, which completes our proof. Towards this end, we show that $\subwords' \in (\subwordschosenprefix \circ \vocab^*) \implies \subwordwordsmap(\subwords') \in (\words_{<t} \circ \lexicon^*)$. For the reader's convenience, we first rewrite \cref{eq:prob_subwords_from_words_with_zero} here:
    \begin{align}\label{eq:prob_subwords_from_words_with_zero2}
        p(\subwords) = \left\{ 
        \begin{array}{cr}
             p(\words) & \texttt{if } \subwords = \wordsubwordsmapmath(\words) \\
             0 & \texttt{if } \subwords \in \unmmapedsubwords
        \end{array}
        \right.
    \end{align}
    We now proceed with our proof. 
    \begin{subequations}
    \begin{align}
        \subwordschosenprefix \circ \vocab^*
        &= \left\{\subwordschosenprefix \circ \subwords' \mid \subwords' \in \vocab^* \right\} & \mathcomment{definition of $\circ$} \\
        &\!\stringequivtowords \left\{\subwordwordsmapmath(\subwordschosenprefix \circ \subwords') \mid \subwords' \in \vocab^* \right\} & \mathcomment{definition of $\stringequivtowords$} \\
        &= \left\{\subwordwordsmapmath(\subwordschosenprefix) \circ \subwordwordsmapmath(\subwords') \mid \subwords' \in \vocab^* \right\} & \mathcomment{$\subwordschosenprefix$ ends in $\vocabeow$, decomposition of $\subwordwordsmapmath$} \\
        &= \subwordwordsmapmath(\subwordschosenprefix) \circ \left\{\subwordwordsmapmath(\subwords') \mid \subwords' \in \vocab^* \right\} & \mathcomment{definition of $\circ$} \\
        &= \words_{<t} \circ \left\{\subwordwordsmapmath(\subwords') \mid \subwords' \in \vocab^* \right\} & \mathcomment{definition of $\subwordschosenprefix$} \\
        &= \words_{<t} \circ \lexicon^* & \mathcomment{co-domain of $\subwordwordsmapmath$}
    \end{align}
    \end{subequations}
    This result implies that any string $\subwords \in (\subwordschosenprefix \circ \vocab^*)$ is either mapped to from a string $\words' \in (\words_{<t} \circ \lexicon^*)$, or not mapped to at all by the tokenisation function $\wordsubwordsmap$.
    (We note again that $\wordsubwordsmap$ only maps each $\words$ to a single subword sequence $\subwords$, even if multiple subword sequences would be detokenised to the same $\words$.)
    As $\wordsprefixset$ is defined as the set of all subword sequences mapped to from $\words' \in (\words_{<t} \circ \lexicon^*)$, we have that
    $((\subwordschosenprefix \circ \vocab^*) \setminus \wordsprefixset) \subseteq \unmmapedsubwords$.
    It follows that the probability of the set $(\subwordschosenprefix \circ \vocab^*)$ includes the probability of no other string $\words' \notin (\words_{<t} \circ \lexicon^*)$.
    By the property $((\subwordschosenprefix \circ \vocab^*) \setminus \wordsprefixset) \subseteq \unmmapedsubwords$, we have that 
    $\pprefixsubwords((\subwordschosenprefix \circ \vocab^*) \setminus \wordsprefixset) = 0$, which completes the proof.
\end{proof}

\probofeowtokeniser*
\begin{proof}
    The first part of this theorem utilizes \cref{lemma:prefix_prob_equivs_eow_tokeniser}.
    We can then derive the probabilities in \cref{eq:prob_eow_tokeniser} as:
    \begin{subequations}
    \begin{align}
        p(\word \mid \words_{<t}) 
        &= \frac{\pprefix(\words_{< t} \circ \word \circ \lexicon^*)}{\pprefix(\words_{<t} \circ \lexicon^*)} \\
        &= \frac{\pprefix\left(\subwordschosenfull \circ \vocab^*\right)}{\pprefix\left(\subwordschosenprefix \circ \vocab^*\right)} \\
        &= \frac{\prod\limits_{t'=1}^{\left|\subwordschosenfull\right|} p\left(\subwordschosenfull_{t'} \mid \subwordschosenfull_{<t'}\right)}
        {\prod\limits_{t'=1}^{\left|\subwordschosenprefix\right|} p\left(\subwordschosenprefix_{t'} \mid \subwordschosenprefix_{<t'}\right)} \\
        &= \prod\limits_{t'=\left|\subwordschosenprefix\right|+1}^{\left|\subwordschosenfull\right|} p\left(\subwordschosenfull_{t'} \mid \subwordschosenfull_{<t'}\right) \\
        &= \prod\limits_{t'=1}^{\left|\subwordschosenword\right|} p\left(\subwordschosenword_{t'} \mid \subwordschosenprefix \circ \subwordschosenword_{<t'}\right)
    \end{align}
    \end{subequations}
    This completes the proof.
\end{proof}

\subsection{Proof of Beginning-of-Word Tokeniser's \texorpdfstring{\Cref{thm:probofbowtokeniser}}{Theorem}}
\label{app:proof_theorem_probofbowtokeniser}

\begin{restatable}{lemma}{setequivofbowtokeniser} \label{lemma:prefix_prob_equivs_bow_tokeniser}
    Let $\wordsubwordsmap$ be a \bow-marking tokeniser.
    We can show the following equivalence:
    \begin{align}
        &\pprefixwords(\words_{<t} \circ \lexicon^*) = \pprefixsubwords(\subwordschosenprefix \circ \vocabboweospluscont) \\
        &\pprefixwords(\words_{<t} \circ \word \circ \lexicon^*) = \pprefixsubwords(\subwordschosenprefix \circ \subwordschosenword \circ \vocabboweospluscont) \nonumber
    \end{align}
\end{restatable}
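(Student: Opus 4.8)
The plan is to follow the proof of the end-of-word analogue, \cref{lemma:prefix_prob_equivs_eow_tokeniser}, step for step, changing only the single place where near-instantaneous (rather than instantaneous) decodability intervenes. As there, I would prove only the first equivalence, $\pprefixwords(\words_{<t} \circ \lexicon^*) = \pprefixsubwords(\subwordschosenprefix \circ \vocabboweospluscont)$; the second follows verbatim after replacing $\words_{<t}$ with $\words_{<t} \circ \word$ and $\subwordschosenprefix$ with $\subwordschosenprefix \circ \subwordschosenword$. Throughout I would use the segmentation-aware decomposition of \cref{defn:pretokenised_tokeniser} together with the \bow word-level form $\wordsubwordmap(\word) \in \vocabbow \circ \vocabmid^*$ from \cref{defn:bow_tokeniser}, and keep the convention $\subwords \circ \eos = \subwords$ in mind for the \eos branch of $\vocabboweospluscont$.

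First (forward inclusion) I would show $\words' \in (\words_{<t} \circ \lexicon^*) \implies \wordsubwordsmap(\words') \in (\subwordschosenprefix \circ \vocabboweospluscont)$. Writing $\words' = \words_{<t} \circ \words''$ and decomposing the map gives $\wordsubwordsmap(\words') = \subwordschosenprefix \circ \wordsubwordsmap(\words'')$. If $\words''$ is empty this is $\subwordschosenprefix$, captured by the \eos branch; otherwise $\wordsubwordsmap(\words'')$ begins with a \vocabbow marker and hence lies in $\vocabbow \circ \vocab^*$. Either way the image lands in $\subwordschosenprefix \circ \vocabboweospluscont$. Letting $A \defeq \{\wordsubwordsmap(\words') \mid \words' \in \words_{<t} \circ \lexicon^*\}$, injectivity of $\wordsubwordsmap$ gives the set-equivalence $\words_{<t} \circ \lexicon^* \stringequiv A$, exactly as argued for the generic construction after \cref{eq:set_equivalence_defn}.

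Next, as in the \eow proof, I would split $\pprefixsubwords(\subwordschosenprefix \circ \vocabboweospluscont) = \pprefixsubwords(A) + \pprefixsubwords((\subwordschosenprefix \circ \vocabboweospluscont) \setminus A)$, so that by \cref{eq:seq_equivalences} it suffices to show the second term is zero. For this I would prove the reverse direction: every $\subwords' \in (\subwordschosenprefix \circ \vocabboweospluscont)$ that is actually in the image of some word sequence (i.e.\ $\subwords' \notin \unmmapedsubwords$) satisfies $\subwordwordsmap(\subwords') \in (\words_{<t} \circ \lexicon^*)$. The crux is that a continuation which is \eos, or which begins with a \vocabbow marker, cleanly closes the boundary after $\subwordschosenprefix$, so the map decomposes as $\subwordwordsmap(\subwords') = \words_{<t} \circ \subwordwordsmap(\text{tail})$ with the tail in $\lexicon^*$. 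Thus every mapped element of $\subwordschosenprefix \circ \vocabboweospluscont$ already lies in $A$, the complement contains only unmapped sequences, and these have probability zero by \cref{eq:prob_subwords_from_words_with_zero}.

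The hard part will be justifying that clean decomposition, which is precisely where \bow departs from \eow: since $\subwordschosenprefix$ ends in a \vocabmid subword rather than a word-final marker, an arbitrary continuation in $\vocabmid \circ \vocab^*$ would instead fuse with the last word of $\words_{<t}$. Because each word tokenises to a string of the form $\vocabbow \circ \vocabmid^*$, such a fused continuation can only occur as (a prefix of) the tokenisation of a word sequence whose $(t-1)$-th word strictly extends $\word_{t-1}$, and therefore lies outside $\words_{<t} \circ \lexicon^*$. Making this rigorous --- confirming that narrowing $\vocab^*$ to $\vocabboweospluscont$ neither drops a mapped sequence nor admits a spurious one --- is the delicate step; the remaining bookkeeping is identical to \cref{lemma:prefix_prob_equivs_eow_tokeniser}.
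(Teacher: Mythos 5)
Your proposal is correct and follows essentially the same route as the paper's own proof: the same forward inclusion (splitting continuations into the $\eos$ branch and the $\vocabbow \circ \vocab^*$ branch), the same definition of the image set and the split of $\pprefixsubwords(\subwordschosenprefix \circ \vocabboweospluscont)$ into that image plus a remainder, and the same reverse-direction argument that a continuation which is $\eos$ or $\vocabbow$-initial lets the detokenisation decompose at the boundary, so the remainder consists only of unmapped sequences of probability zero --- this decomposition step is exactly the one the paper flags with the comment that $\subwords'$ is either empty or starts in $\vocabbow$. One small correction to your motivating remark: $\subwordschosenprefix$ need not end in a $\vocabmid$ subword (the final word of $\words_{<t}$ may tokenise to a single $\vocabbow$ subword); the relevant fact is only that a \bow-marking tokeniser never marks word \emph{ends}, so a $\vocabmid$-initial continuation could still extend the last word, and your argument goes through unchanged.
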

\begin{proof}
    This lemma assumes a segmentation-compatible tokeniser.
    Therefore, we can rely on \cref{defn:pretokenised_tokeniser}, whose mathematical formulation we rewrite here for convenience:
    \begin{align}
        \wordsubwordsmapmath(\words) = \wordsubwordmapmath(\word_1) \circ \wordsubwordmapmath(\word_2) \circ \cdots \circ  \wordsubwordmapmath(\word_{|\words|})
    \end{align}
    Further, as this tokeniser is \bow-marking, we have that: $\wordsubwordmap: \lexicon \to \vocabbow \circ \vocabmid^{*}$.
    We now prove the equivalences above.
    First, we show that $\words' \in (\words_{<t} \circ \lexicon^*) \implies \wordsubwordsmap(\words') \in (\subwordschosenprefix \circ \vocabboweospluscont)$; 
    this shows that the tokenised version of all strings $\words' \in (\words_{<t} \circ \lexicon^*)$ are present in the set $(\subwordschosenprefix \circ \vocabboweospluscont)$.%
    \begin{subequations}
    \begin{align}
        \words_{<t} \circ \lexicon^* 
        &= \left\{\words_{<t} \circ \words' \mid \words' \in \lexicon^* \right\} & \mathcomment{definition of $\circ$} \\
        &\stringequiv \left\{\wordsubwordsmapmath(\words_{<t} \circ \words') \mid \words' \in \lexicon^* \right\} & \mathcomment{definition of $\stringequivtosubwords$} \\
        &= \left\{\wordsubwordsmapmath(\words_{<t}) \circ \wordsubwordsmapmath(\words') \mid \words' \in \lexicon^* \right\} & \mathcomment{decomposition of $\wordsubwordsmapmath$} \\
        &= \wordsubwordsmapmath(\words_{<t}) \circ \left\{\wordsubwordsmapmath(\words') \mid \words' \in \lexicon^* \right\} & \mathcomment{definition of $\circ$ over sets} \\
        &= \subwordschosenprefix \circ \left\{\wordsubwordsmapmath(\words') \mid \words' \in \lexicon^* \right\}  & \mathcomment{definition of $\subwordschosenprefix$} \\
        &= \subwordschosenprefix \circ \left(\{\eos \} \cup \left(\vocabbow \circ \vocabmid^* \circ \left\{\wordsubwordsmapmath(\words') \mid \words' \in \lexicon^* \right\}\right) \right)
        \!\!\!\!\!\!\!\!\!\!\!\! \\
        &\subseteq \subwordschosenprefix \circ \left(\{\eos \} \cup \left(\vocabbow \circ \vocabmid^* \circ \vocab^* \right) \right) \\
        &\subseteq \subwordschosenprefix \circ \vocabboweospluscont
    \end{align}
    \end{subequations}
    \newcommand{\wordsprefixset}{\subwordsset^{\words_{<t} \circ \lexicon^*}}
    We now define the set $\wordsprefixset \defeq \left\{\wordsubwordsmap(\words') \mid \words' \in (\words_{<t} \circ \lexicon^*) \right\}$, and note that $\words_{<t} \circ \lexicon^*  \stringequiv \wordsprefixset$. 
    We can thus split the probability we are computing into two parts:
    \begin{align}
        \pprefixsubwords(\subwordschosenprefix \circ \vocabboweospluscont)
        &= \pprefixsubwords(\wordsprefixset) + 
        \pprefixsubwords((\subwordschosenprefix \circ \vocabboweospluscont) \setminus \wordsprefixset) 
    \end{align}
    By the same logic as in \cref{lemma:prefix_prob_equivs_eow_tokeniser}, if we prove that $\pprefixsubwords((\subwordschosenprefix \circ \vocabboweospluscont) \setminus \wordsprefixset) = 0$, then we have that $\pprefixwords(\words_{<t} \circ \lexicon^*) = \pprefixsubwords(\subwordschosenprefix \circ \vocabboweospluscont)$.
    To this end, we show that $\subwords' \in (\subwordschosenprefix \circ \vocabboweospluscont) \implies \subwordwordsmap(\subwords') \in (\words_{<t} \circ \lexicon^*)$. 
    As with \cref{lemma:prefix_prob_equivs_eow_tokeniser}, this result implies that the tokenised version of no other strings $\words' \notin (\words_{<t} \circ \lexicon^*)$ are present in the set $(\subwordschosenprefix \circ \vocabboweospluscont)$, which itself implies that $(\subwordschosenprefix \circ \vocabboweospluscont) \setminus \wordsprefixset \subseteq \unmmapedsubwords$.
    \begin{subequations}
    \begin{align}
        \subwordschosenprefix \circ \vocabboweospluscont
        &= \left\{\subwordschosenprefix \circ \subwords' \mid \subwords' \in \vocabboweospluscont \right\} & \mathcomment{definition of $\circ$} \\
        &\stringequivtowords \left\{\subwordwordsmapmath(\subwordschosenprefix \circ \subwords') \mid \subwords' \in \vocabboweospluscont \right\} & \mathcomment{definition of $\stringequivtowords$} \\
        &= \left\{\wordsubwordsmapmath(\subwordschosenprefix) \circ \wordsubwordsmapmath(\subwords') \mid \subwords' \in \vocabboweospluscont \right\} \\
        && 
        \!\!\!\!\!\!\!\!\!\!\!\!\!\!\!\!\!\!\!\!\!\!\!\!\!
        \!\!\!\!\!\!\!\!\!\!\!\!\!\!\!\!\!\!\!\!\!\!\!\!\!
        \!\!\!\!\!\!\!\!\!\!\!\!\!\!\!\!\!\!\!\!\!\!\!\!\!
        \!\!\!\!\!\!\!\!\!\!\!\!\!\!\!\!\!\!\!\!\!\!\!\!\!
        \mathcomment{$\subwords'$ is either empty, or starts in $\vocabbow$, $\wordsubwordsmapmath$ thus decomposes} \\
        &= \wordsubwordsmapmath(\subwordschosenprefix) \circ \left\{\wordsubwordsmapmath(\subwords') \mid \subwords' \in \vocabboweospluscont \right\} & \qquad \mathcomment{definition of $\circ$ over sets} \\
        &= \words_{<t} \circ \left\{\wordsubwordsmapmath(\subwords') \mid \subwords' \in \vocabboweospluscont \right\} & \mathcomment{definition of $\subwordschosenprefix$} \\
        &= \words_{<t} \circ \lexicon^* & \mathcomment{co-domain of $\wordsubwordsmapmath$}
    \end{align}
    \end{subequations}
    Since $\subwords \in ((\subwordschosenprefix \circ \vocabboweospluscont) \setminus \wordsprefixset) \implies \subwords \in \unmmapedsubwords$, we have that 
    $\pprefixsubwords((\subwordschosenprefix \circ \vocabboweospluscont) \setminus \wordsprefixset) \!=\! 0$, which completes the proof.
\end{proof}

\probofbowtokeniser*
\begin{proof}
    The first part of this theorem simply re-writes \cref{lemma:prefix_prob_equivs_bow_tokeniser}.
    We now derive the probabilities in \cref{eq:prob_bow_tokeniser} as:
\begin{subequations}
\begin{align}
    p(\word \mid \words_{<t}) 
    &= \frac{\pprefix(\words_{< t} \circ \word \circ \lexicon^*)}{\pprefix(\words_{<t} \circ \lexicon^*)} \\
    &= \frac{\sum_{\{\subword \in \vocabboweos\}} \pprefix\left(\subwordschosenfull \circ \subword \circ \vocab^*\right)}{\sum_{\{\subword \in \vocabboweos\}} \pprefix\left(\subwordschosenprefix \circ \subword \circ \vocab^*\right)} \\
    &= \frac{\sum_{\{\subword \in \vocabboweos\}} p\left(\subword \mid \subwordschosenfull\right)\,\prod\limits_{t'=1}^{\left|\subwordschosenfull\right|} p\left(\subwordschosenprefix_{t'} \mid \subwordschosenprefix_{<t'}\right)}
    {\sum_{\{\subword \in \vocabboweos\}} p\left(\subword \mid \subwordschosenprefix\right)\,\prod\limits_{t'=1}^{\left|\subwordschosenprefix\right|} p\left(\subwordschosenprefix_{t'} \mid \subwordschosenprefix_{<t'}\right)} \\
    &= \frac{\prod\limits_{t'=1}^{\left|\subwordschosenfull\right|} p\left(\subwordschosenfull_{t'} \mid \subwordschosenfull_{<t'}\right)\,\sum_{\{\subword \in \vocabboweos\}} p\left(\subword \mid \subwordschosenfull\right)}
    {\prod\limits_{t'=1}^{\left|\subwordschosenprefix\right|} p\left(\subwordschosenprefix_{t'} \mid \subwordschosenprefix_{<t'}\right)\,\sum_{\{\subword \in \vocabboweos\}} p\left(\subword \mid \subwordschosenprefix\right)} \\
    &= \frac{\prod\limits_{t'=\left|\subwordschosenprefix+1\right|}^{\left|\subwordschosenfull\right|} p\left(\subwordschosenfull_{t'} \mid \subwordschosenfull_{<t'}\right)\,\sum_{\{\subword \in \vocabboweos\}} p\left(\subword \mid \subwordschosenfull\right)}
    {\sum_{\{\subword \in \vocabboweos\}} p\left(\subword \mid \subwordschosenprefix\right)} \\
    &= \prod\limits_{t'=1}^{\left|\subwordschosenword\right|} p\left(\subwordschosenword_{t'} \mid \subwordschosenprefix \circ \subwordschosenword_{<t'}\right)\,
    \frac{\sum_{\{\subword \in \vocabboweos\}} p\left(\subword \mid \subwordschosenfull\right)}
    {\sum_{\{\subword \in \vocabboweos\}} p\left(\subword \mid \subwordschosenprefix\right)}
\end{align}
\end{subequations}
This completes the proof.
\end{proof}

\subsection{Theorem of Non-\eow-marking Final-word Tokeniser's}
\label{app:proof_theorem_probofeowateostokeniser}

\begin{restatable}{theorem}{probofeowateostokeniser} \label{thm:probofeowateostokeniser}
    Let $\wordsubwordsmap$ be a \eow-marking tokeniser with unmarked final word.
    We can show the following equivalence:
    \begin{align}\label{eq:eowateos_prob_equivalence}
        &\pprefixwords(\words_{<t} \circ \lexicon^*) = \pprefixsubwords(\subwordschosenprefix \circ \vocab^*) \\
        &\pprefixwords(\words_{<t} \circ \word \circ \lexicon^*) = 
        \pprefixsubwords((\subwordschosenprefix \circ \subwordschosenword \circ \vocab^*) \cup \{\subwordschosenprefix \circ \subwordschosenword_{\midstring}\}) \nonumber
    \end{align}
    Further, we can compute a word's probability as:
    \begin{align} \label{eq:prob_eoweos_tokeniser_theorem}
        &p(\word \mid \words_{<t}) = 
        \bigg(\!
        p\left(\subwordschosenword_{\midstring} \!\mathop{\mid} \subwordschosenprefix\right) \!\! \underbrace{\sum_{\subword \in \vocabpunct} \!\! p(\subword\!\mathop{\mid} \subwordschosenprefix\!\mathop{\circ} \subwordschosenword_{\midstring})
        \!\bigg) \!+\! 
        p(\subwordschosenword\!\mathop{\mid} \subwordschosenprefix)
        }_{\text{\usebugfixcounter{bugfix:eowateos}}}
    \end{align}
\end{restatable}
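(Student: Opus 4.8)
The plan is to mirror the two-step structure of the \eow case---\cref{lemma:prefix_prob_equivs_eow_tokeniser} for the prefix-probability (set) equivalence, then \cref{thm:probofeowtokeniser} for the probability computation---adapting it to the fact that the final word, or a word immediately preceding punctuation, is emitted \emph{without} an \eow subword. The denominator requires no new work: the final word of $\words_{<t}$ is interior and is followed by the regular word $\word$, so it is still \eow-marked, meaning $\subwordschosenprefix = \wordsubwordsmap(\words_{<t})$ ends in $\vocabeow$ and the argument of \cref{lemma:prefix_prob_equivs_eow_tokeniser} applies verbatim to give $\pprefixwords(\words_{<t} \circ \lexicon^*) = \pprefixsubwords(\subwordschosenprefix \circ \vocab^*)$. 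All of the novelty is in the numerator, where $\word$ may itself be the last emitted word.

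Next I would establish the numerator equivalence by partitioning the word-level continuations of $\word$ according to whether $\word$ receives an \eow mark. Under the modified emission rule, $\word$ is \eow-marked exactly when a further, non-punctuation word follows it, and unmarked when it is either the last word or immediately followed by punctuation. For the marked continuations, $\wordsubwordsmap(\words_{<t} \circ \word \circ \words') = \subwordschosenprefix \circ \subwordschosenword \circ \wordsubwordsmap(\words')$ with $\subwordschosenword = \wordsubwordmapeow(\word)$, so these land in $\subwordschosenprefix \circ \subwordschosenword \circ \vocab^*$, precisely as in \cref{lemma:prefix_prob_equivs_eow_tokeniser}. For the unmarked continuations, $\word$ maps instead to $\subwordschosenword_{\midstring} = \wordsubwordmapmid(\word)$, and the continuation either ends the sequence (an \eos) or begins with a punctuation subword; these therefore land in $\subwordschosenprefix \circ \subwordschosenword_{\midstring} \circ \vocabpunct \circ \vocab^*$, of which the theorem's singleton $\{\subwordschosenprefix \circ \subwordschosenword_{\midstring}\}$ is the end-of-sequence specialisation under the convention $\subwords \circ \eos = \subwords$. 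Since these two families of word sequences are disjoint and $\wordsubwordsmap$ is injective, the two subword sets are disjoint; and, exactly as in \cref{lemma:prefix_prob_equivs_eow_tokeniser}, every remaining element of the written sets lies in $\unmmapedsubwords$ and contributes zero probability by \cref{eq:prob_subwords_from_words_with_zero}. This yields the numerator equivalence.

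Finally, I would substitute both equivalences into the quotient of \cref{lemma:probability_word_from_lm} and expand each prefix probability with \cref{lemma:chain_rule}. The shared factor $\pprefixsubwords(\subwordschosenprefix \circ \vocab^*)$ cancels against the denominator; the marked set telescopes to $p(\subwordschosenword \mid \subwordschosenprefix)$, while splitting the unmarked set over its next symbol $\subword \in \vocabpunct$ (with $\eos$ contributing its terminal conditional, exactly as in the \bow derivation) telescopes to $p(\subwordschosenword_{\midstring} \mid \subwordschosenprefix)$ times $\sum_{\subword \in \vocabpunct} p(\subword \mid \subwordschosenprefix \circ \subwordschosenword_{\midstring})$. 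Adding the two terms gives \cref{eq:prob_eoweos_tokeniser_theorem}. I expect the main obstacle to be the bookkeeping of this case split: pinning down precisely when the emission rule suppresses the \eow mark, verifying that no extra \emph{mapped} sequence slips into either written set, and treating the \eos and punctuation continuations uniformly---this last point being exactly where a naive computation that silently assumes instantaneous decodability would go wrong.
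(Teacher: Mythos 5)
The paper never actually proves this statement: \cref{thm:probofeowateostokeniser} appears in \cref{app:proof_theorem_probofeowateostokeniser} with no proof attached, so your attempt must be judged on its own rather than against the authors' argument. Your overall architecture---split the continuations of $\word$ by whether it receives an \eow mark, establish set equivalences, then cancel and telescope via \cref{lemma:probability_word_from_lm,lemma:chain_rule}---is the natural extension of the paper's proofs of \cref{thm:probofeowtokeniser,thm:probofbowtokeniser}, and your numerator analysis is sound; you are also right that the punctuation continuations $\subwordschosenprefix \circ \subwordschosenword_{\midstring} \circ \vocab_{\texttt{!?}} \circ \vocab^*$ must be adjoined to the sets written in \cref{eq:eowateos_prob_equivalence} for the stated formula, which sums over all of $\vocabpunct$ rather than just $\{\eos\}$, to come out.

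The genuine gap is your denominator step, and it sits exactly where this corner case bites. The claim that ``the final word of $\words_{<t}$ is interior and is followed by the regular word $\word$, so \cref{lemma:prefix_prob_equivs_eow_tokeniser} applies verbatim'' is circular: the denominator event $\words_{<t} \circ \lexicon^*$ ranges over \emph{all} continuations of $\words_{<t}$, including the empty continuation and those beginning with punctuation, and on precisely these the unmarked-final-word rule breaks the decomposition of \cref{defn:pretokenised_tokeniser} on which the proof of \cref{lemma:prefix_prob_equivs_eow_tokeniser} rests. Concretely, the complete sequence $\words_{<t}$ tokenises to a mid-form prefix $\subwords^{\words_{<t}}_{\midstring} \notin \subwordschosenprefix \circ \vocab^*$, and similarly when $\words_{<t}$ is followed by punctuation; these subword sequences are mapped (not in $\unmmapedsubwords$), so their mass cannot be discarded. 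Hence $\pprefixwords(\words_{<t} \circ \lexicon^*) = \pprefixsubwords(\subwordschosenprefix \circ \vocab^*) + p\big(\subwords^{\words_{<t}}_{\midstring}\big) + \pprefixsubwords\big(\subwords^{\words_{<t}}_{\midstring} \circ \vocab_{\texttt{!?}} \circ \vocab^*\big)$, so the first line of \cref{eq:eowateos_prob_equivalence} fails whenever the model puts mass on $\words_{<t}$ terminating or preceding punctuation, and the quotient you form then overestimates $p(\word \mid \words_{<t})$. To be fair, this defect is arguably in the theorem statement itself (perhaps why it is left unproven); but a proof that asserts the lemma ``applies verbatim'' hides rather than resolves it---the very phenomenon you treat carefully in the numerator (mid-form when final or pre-punctuation) applies equally to $\words_{<t}$ in the denominator, and a correct treatment must either add these terms to the denominator or state an explicit assumption ruling them out.
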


\subsection{Theorem of Non-\bow-marking First-word Tokeniser's}
\label{app:proof_theorem_probofbowatbostokeniser}

\begin{restatable}{theorem}{probofbowatbostokeniser} \label{thm:probofbowatbostokeniser}
    Let $\wordsubwordsmap$ be a \bow-marking tokeniser with unmarked first words.
    We can show the following equivalence:
    \begin{align}\label{eq:bow_at_bos_prob_equivalence}
        &\pprefixwords(\lexicon^*) = \pprefixsubwords(\vocabmideospluscont) \\
        &\pprefixwords(\word \circ \lexicon^*) = 
        \pprefixsubwords(\subwordschosenword_{\midstring} \circ \vocabboweospluscont) \nonumber
    \end{align}
    Further, we can compute a word's probability as:
    \begin{align} \label{eq:prob_bowatbos_tokeniser_theorem}
        &p(\word \mid \words_{<t}) = 
        p\left(\subwordschosenword_{\midstring} \mid \emptystring \right)\,
        \underbrace{\frac{\sum_{\{\subword \in \vocabboweos \}} p\left(\subword \mid \subwordschosenword\right)}
        {\sum_{\{\subword \in \vocabmideos\}} p\left(\subword \mid \emptystring \right)}}_{\text{\usebugfixcounter{bugfix:bowatbos}}}
    \end{align}
\end{restatable}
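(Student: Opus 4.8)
The plan is to follow exactly the two-stage template already used for the \bow-marking case (\cref{lemma:prefix_prob_equivs_bow_tokeniser} and \cref{thm:probofbowtokeniser}), adapting it to the one structural change here: the first word of a sequence is tokenised \emph{without} a \bow marker, via $\wordsubwordmapmid$ rather than $\wordsubwordmapbow$. Concretely, I would first establish the two prefix-probability equivalences in \cref{eq:bow_at_bos_prob_equivalence} as a standalone lemma, and then derive the closed form \cref{eq:prob_bowatbos_tokeniser_theorem} by combining \cref{lemma:probability_word_from_lm} (the quotient formula) with \cref{lemma:chain_rule} (the chain rule), decomposing each prefix set into a singleton \eos piece and a family of continuation cylinders.

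For the set equivalences, the key input is that in the empty-context setting the sequence-level map decomposes as $\wordsubwordsmapmath(\words) = \wordsubwordmapmid(\word_1) \circ \wordsubwordmapbow(\word_2) \circ \cdots \circ \wordsubwordmapbow(\word_{|\words|})$, so the first word lands in $\vocabmid^*$ while every later word lies in $\vocabbow \circ \vocabmid^*$. For the denominator I would prove $\lexicon^* \stringequiv \vocabmideospluscont$: the empty word sequence maps to $\eos$ and any non-empty sequence maps to something beginning with a $\vocabmid$ subword (forward inclusion), while the backward direction shows that every element of $\vocabmideospluscont$ either detokenises via $\subwordwordsmap$ to a valid element of $\lexicon^*$ or lies in $\unmmapedsubwords$ and hence has zero probability. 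For the numerator I would argue identically that $\word \circ \lexicon^* \stringequiv \subwordschosenword_{\midstring} \circ \vocabboweospluscont$, using that after the unmarked first word $\subwordschosenword_{\midstring}$ the continuation is either $\eos$ or begins a fresh \bow-marked word in $\vocabbow \circ \vocab^*$. Both arguments are line-for-line copies of the \bow lemma's algebra, with the prefix $\subwordschosenprefix$ replaced by the empty prefix and the first-word map swapped from $\wordsubwordmapbow$ to $\wordsubwordmapmid$.

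Given these equivalences, \cref{lemma:probability_word_from_lm} yields $p(\word \mid \emptystring) = \pprefixsubwords(\subwordschosenword_{\midstring} \circ \vocabboweospluscont) / \pprefixsubwords(\vocabmideospluscont)$. I would then partition each set into disjoint prefix cylinders, $\vocabmideospluscont = \{\eos\} \cup \bigcup_{\subword \in \vocabmid}(\subword \circ \vocab^*)$ and $\subwordschosenword_{\midstring} \circ \vocabboweospluscont = \{\subwordschosenword_{\midstring}\} \cup \bigcup_{\subword \in \vocabbow}(\subwordschosenword_{\midstring} \circ \subword \circ \vocab^*)$, where each singleton records the $\eos$ continuation. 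Applying \cref{lemma:chain_rule} to each cylinder collapses the denominator to $\sum_{\subword \in \vocabmideos} p(\subword \mid \emptystring)$ and the numerator to $p(\subwordschosenword_{\midstring} \mid \emptystring)\sum_{\subword \in \vocabboweos} p(\subword \mid \subwordschosenword_{\midstring})$, since the shared factor $\pprefixsubwords(\subwordschosenword_{\midstring} \circ \vocab^*) = p(\subwordschosenword_{\midstring} \mid \emptystring)$ pulls out of the numerator sum. Taking the ratio reproduces \cref{eq:prob_bowatbos_tokeniser_theorem}, with the numerator sum conditioned on the subwords emitted so far, namely $\subwordschosenword_{\midstring}$.

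The main obstacle is the backward inclusion in the set-equivalence step: I must verify that no spurious subword sequence sneaks into $\vocabmideospluscont$ or $\subwordschosenword_{\midstring} \circ \vocabboweospluscont$ with positive probability. The genuinely new subtlety, relative to the ordinary \bow case, is that the \emph{first} position is governed by $\vocabmid$ rather than $\vocabbow$, so I need to confirm that a leading $\vocabmid$ subword is exactly the signature of an unmarked first word and cannot instead arise as the non-initial middle of some other word-sequence tokenisation; this is what pins down the $\vocabmideospluscont$ boundary and is the only place where the non-\bow treatment of $\word_1$ actually changes the argument. Everything downstream is the same disjoint-cylinder bookkeeping and chain-rule telescoping already carried out in \cref{thm:probofbowtokeniser}.
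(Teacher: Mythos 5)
Your proposal is correct, and it is essentially the paper's own method—note, though, that the paper states \cref{thm:probofbowatbostokeniser} in \cref{app:proof_theorem_probofbowatbostokeniser} \emph{without} any accompanying proof, so your argument fills a gap the paper leaves open rather than mirroring an existing derivation. Your two-stage plan (a set-equivalence lemma proved by forward/backward inclusions, with the leftover subword sequences shown to be unreachable by the tokeniser and hence of zero probability under exactness, followed by \cref{lemma:probability_word_from_lm} and \cref{lemma:chain_rule} applied to disjoint cylinders with the \eos singletons handled separately) is exactly the template of \cref{lemma:prefix_prob_equivs_bow_tokeniser} and \cref{thm:probofbowtokeniser}, and the genuinely new step you flag—that a maximal leading run of $\vocabmid$ subwords terminated by a $\vocabbow$ subword or \eos can only arise as the image of the unmarked first word, by injectivity of the word-level map—is the right key point. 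One detail: your derivation conditions the numerator sum on $\subwordschosenword_{\midstring}$ rather than on $\subwordschosenword$ as printed in \cref{eq:prob_bowatbos_tokeniser_theorem}; yours is the correct object (under this convention the model never sees a \bow-marked tokenisation of the first word), so the printed statement contains a notational slip that your proof implicitly corrects.
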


\end{document}